\documentclass[12pt]{alt2020} 


\usepackage{bm}
\newcommand{\qedhere}{}
\usepackage{times}  
\usepackage{amsmath,mathtools}
\usepackage{dsfont,amssymb}
\usepackage{algorithm}
\usepackage{tikz}
\usepackage{url}  
\usepackage{verbatim}
\usepackage{color}
\usepackage{graphicx}  
\usepackage{algorithm}
\usepackage{algorithmic}
\usepackage{amssymb}
\usepackage{natbib}
\usepackage{multirow}
\usepackage{enumitem}
\usepackage{nicefrac}

\newtheorem{claim}{Claim}

\newcommand{\beq}{\begin{eqnarray}}
\newcommand{\eeq}{\end{eqnarray}}





\newfont{\bbb}{msbm10 scaled 500}

\newfont{\bb}{msbm10 scaled 1100}
\newcommand{\CC}{\mbox{\bb C}}
\newcommand{\RR}{\mbox{\bb R}}
\newcommand{\ZZ}{\mbox{\bb Z}}

\newcommand{\EE}{\mbox{\bb E}}




\newcommand{\Ac}{{\cal A}}
\newcommand{\Bc}{{\cal B}}

\newcommand{\Mc}{{\cal M}}
\newcommand{\Nc}{{\cal N}}


\newcommand{\remove}[1]{}

\newcommand{\avg}{{\mathbb E}}

\newcommand\reals{{\mathbb R}}

\newcommand\integers{{\mathbb Z}}

\newcommand{\variation}[1]{\left\|#1\right\|_{\textrm{TV}}}

%
%
%
%






\usepackage{comment}
\newcount\Comments  
\Comments=1 
\definecolor{darkgreen}{rgb}{0,0.5,0}
\definecolor{darkred}{rgb}{0.7,0,0}
\definecolor{purple}{rgb}{0.7,0,0.7}
\definecolor{teal}{rgb}{0.3,0.8,0.8}
\newcommand{\kibitz}[2]{\ifnum\Comments=1\textcolor{#1}{#2}\fi}

\newcommand{\f}[1]{\boldsymbol{#1}}
\newcommand{\ca}[1]{\mathcal{#1}}

\title[Algebraic and Analytic Approaches for Parameter Learning in Mixture Models]{Algebraic and Analytic Approaches for Parameter Learning in Mixture Models}
\usepackage{times}



\altauthor{%
 \Name{Akshay Krishnamurthy} \Email{akshay@cs.umass.edu}\\
 \addr Microsoft Research, New York City, NY 10011
 \AND
 \Name{Arya Mazumdar} \Email{arya@cs.umass.edu}\\
 \addr University of Massachusetts Amherst, MA 01003, USA%
 \AND
 \Name{Andrew McGregor} \Email{mcgregor@cs.umass.edu}\\
 \addr University of Massachusetts Amherst, MA 01003, USA%
 \AND
 \Name{Soumyabrata Pal} \Email{spal@cs.umass.edu}\\
 \addr University of Massachusetts Amherst, MA 01003, USA%
}

\begin{document}

\maketitle

\begin{abstract}%
 We present two different approaches for parameter learning in several mixture models in one dimension. Our first approach uses complex-analytic methods and applies to Gaussian mixtures with shared variance, binomial mixtures with shared success probability, and Poisson mixtures, among others. An example result is that $\exp(O(N^{1/3}))$ samples suffice to exactly learn a mixture of $k<N$ Poisson distributions, each with integral rate parameters bounded by $N$. Our second approach uses algebraic and combinatorial tools and applies to binomial mixtures with shared trial parameter $N$ and
differing success parameters, as well as to mixtures of geometric distributions. Again, as an example, for binomial mixtures with $k$ components and success parameters discretized to resolution $\epsilon$, $O(k^2(\nicefrac{N}{\epsilon})^{\nicefrac{8}{\sqrt{\epsilon}}})$ samples suffice to exactly recover the parameters. For some of these distributions, our results represent the first guarantees for parameter estimation.
\end{abstract}

\begin{keywords}%
  Parameter learning, mixture model, complex analysis, method of moments.
\end{keywords}

\section{Introduction}
Mixture modeling is a powerful method in the statistical toolkit, with
widespread use across the
sciences~\citep{titterington1985statistical}. Starting with the
seminal work of~\cite{dasgupta1999learning}, computational and
statistical aspects of learning mixture models have been the subject
of intense investigation in the theoretical computer science and
statistics
communities~\citep{achlioptas2005spectral,kalai2010efficiently,belkin2010polynomial,arora2001learning,moitra2010settling,feldman2008learning,chan2014efficient,acharya2017sample,hopkins2018mixture,diakonikolas2018list,kothari2018robust,hardt2015tight}.

In this literature, there are two flavors of result: (1)
\emph{parameter estimation}, where the goal is to identify the mixing
weights and the parameters of each component from samples, and (2)
\emph{density estimation} or PAC-learning, where the goal is simply to
find a distribution that is close in some distance (e.g., TV distance)
to the data-generating mechanism. Density estimation can be further
subdivided into \emph{proper} and \emph{improper learning} approaches
depending on whether the algorithm outputs a distribution from the
given mixture family or not. These three guarantees are quite
different. Apart from Gaussian mixtures, where all types of results
exist, prior work for other mixture families largely focuses on
density estimation, and very little is known for parameter estimation
outside of Gaussian mixture models.  In this paper, we focus on
parameter estimation and provide two new approaches, both of which
apply to several mixture families.

Our first approach is analytic in nature and yields new sample
complexity guarantees for univariate mixture models including
Gaussian, Binomial, and Poisson. Our key technical insight is that we
can relate the total variation between two candidate mixtures to a
certain Littlewood polynomial, and then use complex analytic
techniques to establish separation in TV-distance. With this
separation result, we can use density estimation techniques
(specifically proper learning techniques) to find a candidate mixture
that is close in TV-distance to the data generating mechanism.  The
results we obtain via this approach are labeled as ``analytic" in
Table~\ref{tab:tab1}. This approach has recently led to important
advances in the trace reconstruction and population recovery problems;
see work by \cite{DeOS17}, \cite{NazarovP17}, and \cite{DeOS17x}.



Our second approach is based on the \emph{method of moments}, a
popular approach for learning Gaussian mixtures, and is more
algebraic.  Roughly, these algorithms are based on expressing moments
of the mixture model as polynomials of the component parameters, and
then solving a polynomial system using estimated moments. This approach
has been studied in some generality by~\citet{belkin2010polynomial}
who show that it can succeed for a large class of mixture
models. However, as their method uses non-constructive arguments from
algebraic geometry it cannot be used to bound how many moments are
required, which is essential in determining the sample complexity; see
a discussion in \cite[Section 7.6]{moitra2018algorithmic}. In
contrast, our approach does yield bounds on how many moments suffice
and can be seen as a quantified version of the results
in~\citet{belkin2010polynomial}. The results we obtain via this
approach are labeled as ``algebraic'' in Table~\ref{tab:tab1}.

The literature on mixture models is quite large, and we have just referred to a sample of most relevant papers here. A bigger overview on learning distributions can be found in the recent monographs such as \cite{moitra2018algorithmic,diakonikolas2016learning}.


\subsection{Overview of results}
As mentioned, an overview of our sample complexity results are
displayed in Table~\ref{tab:tab1}, where in all cases we consider a
uniform mixture of $k$ distributions. Our guarantees are for
\emph{exact} parameter estimation, under the assumption that the
mixture parameters are discretized to a particular resolution, given
in the third column of the table. Theorem statements are given in the
sequel.

At first glance the guarantees seem weak, since they all involve
exponential dependence in problem parameters. However, except for the
Gaussian case, these results are the first guarantees for parameter
estimation for these distributions. All prior results we are aware of
consider density
estimation~\citep{chan2013learning,feldman2008learning}.

For the mixtures of discrete distributions, such as binomial and negative binomial with shared trial parameter, or Poisson/geometric/chi-squared mixtures with certain discretizations, it seems like the dependence of sample complexity on the number of components $k$ is polynomial (see Table~\ref{tab:tab1}). Note that for these examples $k \le N$, the upper bounds on parameter values. Therefore the actual dependence on $k$ can still be interpreted as exponential. The results are especially interesting when $k$ is large and possibly growing with $N$.

For Gaussian mixtures, the most interesting aspect of our bound is the
polynomial dependence on the number of components $k$ (first row of
Table~\ref{tab:tab1}).  In our setting and taking $\sigma = 1$, the
result of~\citet{moitra2010settling} is applicable, and it yields
$\epsilon^{-O(k)}$ sample complexity, which is incomparable to our
$k^3\exp(O(\epsilon^{-2/3}))$ bound. Note that our result avoids an
exponential dependence in $k$, trading this off for an exponential
dependence on the discretization/accuracy parameter
$\epsilon$.\footnote{Due to our discretization structure, our results
  do not contradict the lower bounds
  of~\citet{moitra2010settling,hardt2015tight}.} Other results for
Gaussian mixtures either 1) consider density
estimation~\citep{daskalakis2014faster,feldman2008learning}, which is
qualitatively quite different from parameter estimation, 2) treat $k$
as constant~\citep{hardt2015tight,kalai2010efficiently}, or 3) focus
on the high dimensional setting and require separation assumptions
(see for example~\cite{diakonikolas2017statistical}
and~\cite{moitra2018algorithmic}).

As such, our results
  reflect a new sample complexity tradeoff for parameter estimation
in Gaussian mixtures.



As another note, using ideas from~\citep{NazarovP17,DeOS17}, one can show that the analytic result for Binomial mixtures is optimal. 
This
raises the question of whether the other results are also optimal or
is learning a Binomial mixture intrinsically harder than learning,
e.g., a Poisson or Gaussian mixture?

As a final remark, our assumption that parameters are discretized is
related to separation conditions that appear in the literature on
learning Gaussian mixtures. However, our approach does not seem to
yield guarantees when the parameters do not exactly fall into the
discretization. We hope to resolve this shortcoming in future work.

\begin{table}
\begin{center}
\renewcommand{\arraystretch}{2}
\begin{tabular}{|c | c | c | c | c |}
\hline
Distribution & Pdf/Pmf $f(x;\theta)$ & Discretization & Sample Complexity & Approach\\
\hline\hline
Gaussian & $\frac{1}{\sqrt{2\pi}\sigma}e^{-\frac{(x-\mu)^{2}}{\sigma^2}}$ & $\mu_i \in \epsilon\ZZ$ & $k^3 \exp(O((\sigma/\epsilon)^{2/3}))$ & Analytic\\
\hline
\multirow{ 2}{*}{Binomial} & \multirow{ 2}{*}{${n \choose x}p^{x}(1-p)^{n-x}$} & $n_i \in \{1,2,\ldots,N\}$ & $\exp(O(((N/p)^{1/3})))$ & Analytic \protect\footnotemark\\
 \cline{3-5} 
& & $p_i \in  \{0,\epsilon,\ldots,1\}$ & $O(k^2(n/\epsilon)^{8/\sqrt{\epsilon}})$ & Algebraic \\
\hline
Poisson & $\frac{\lambda^x e^{-\lambda}}{x!}$ & $\lambda_i \in \{0,1,\dots,N\}$ & $\exp(O(N^{1/3})) $ & Analytic\\ \hline
\multirow{ 2}{*}{Geometric} & \multirow{ 2}{*}{$(1-p)^x p $} & $1/p_i \in \{1,\dots,N\}$ & $O(k^2(\sqrt{N})^{8\sqrt{N}})$  & Algebraic\\  \cline{3-5} 
& & $p_i \in \{0,\epsilon,\dots,1\}$ & $O(\frac{k^2}{\epsilon^{8/\sqrt{\epsilon}+2}}\log \frac{1}{{\epsilon}})$  & Algebraic\\ \hline
$\chi^2$ & $\frac{x^{n/2-1}e^{-x/2}}{2^{n/2}\Gamma(n/2)}$ & $n_i \in \{0,1, \dots,N\}$ &  $\exp(O(N^{1/3})) $ & Analytic \\ \hline
Negative Binomial & ${x+r-1 \choose x} (1-p)^rp^x$ & $r_i \in  \{1,2,\ldots, N\}$ & $\exp(O((N/p)^{1/3})) $ & Analytic\\ \hline
\end{tabular}
\end{center}
\caption{Overview of our results. Results are given for uniform mixtures of $k$ different components but some can be extended to non-uniform mixtures.\label{tab:tab1} Note that for rows 2, 4, 7, and 8, $k$ does not appear. This is because $k\leq N$ and other terms dominate.}
\end{table}

\subsection{Our techniques}
To establish these results, we take two loosely related approaches.
In our analytic approach, the key structural result is to lower bound
the total variation distance between two mixtures $\Mc,\Mc'$ by a
certain Littlewood polynomial. For each distribution type, if the
parameter is $\theta$, we find a function $G_t: \RR \to \CC$ such that
\begin{align*}
\EE[G_t(X)] = \exp(it\theta).
\end{align*}
(For Gaussians, $G_t$ is essentially the characteristic function).
Such functions can be used to obtain Littlewood polynomials from the
difference in expectation for two different mixtures, for example if
the parameters $\theta$ are integral and the mixture weights are
uniform. Applying complex analytic results on Littlewood polynomials,
this characterization yields a lower bound on the total variation
distance between mixtures, at which point we may use density
estimation techniques for parameter learning. Specifically we use the
minimum distance estimator (see,~\citet[Sec.~6.8]{devroye2012combinatorial}), which
is based on the idea of Scheffe sets. Scheffe sets are building blocks of the Scheffe estimator, 
commonly used in density estimation, e.g.~\citet{suresh2014near}. 


\footnotetext{We obtained this result as a byproduct of sparse trace reconstruction~\citep{krishnamurthy2019trace}. In fact, the present work was motivated by the observation that the  technique we were using there is much more general.}

  



Our algebraic approach is based on the more classical method of
moments. Our key innovation here is a combinatorial argument to bound
the number of moments that we need to estimate in order to exactly
identify the correct mixture parameters. In more detail, when the
parameters belong to a discrete set, we show that the moments reveal
various statistics about the multi-set of parameters in the
mixture. Then, we adapt and extend classical combinatorics results on
sequence reconstruction to argue that two distinct multi-sets must
disagree on a low-order moment. These combinatorial results are
related to the Prouhet-Tarry-Escott problem (see, e.g.,
\cite{Borwein2002}) which also has connections to Littlewood
polynomials. To wrap up we use standard concentration arguments to
estimate all the necessary moments, which yields the sample complexity
guarantees.

We note that the complex analytic technique provides non-trivial result only for those mixtures for which an appropriate function $G_t$ exists. On the other hand, the algebraic approach works for all mixtures whose $\ell^{th}$ moment can be described as a polynomial of degree exactly $\ell$ in its unknown parameters. In \cite{belkin2010polynomial}, it was shown that most distributions have this later property. In general, where both methods can be applied, the complex analytic techniques typically provide tighter sample complexity bounds than the algebraic ones.



\section{Learning Mixtures via Characteristic Functions}
In this section, we show how analysis of the characteristic function
can yield sample complexity guarantees for learning mixtures. At a
high level, the recipe we adopt is the following.
\begin{enumerate}
\item First, we show that, in a general sense, the total variation distance between two
  separated mixtures is lower bounded by the $L_{\infty}$ norm of
  their characteristic functions.
\item Next, we use complex analytic methods and specialized arguments
  for each particular distribution to lower bound the latter norm.
\item Finally, we use the minimum distance
  estimator~\citep{devroye2012combinatorial} to find a mixture that is
  close in total variation to the data generating distribution. Using
  uniform convergence arguments this yields exact parameter learning.
\end{enumerate}

The two main results we prove in this section are listed below.
\begin{theorem}[Learning Gaussian mixtures]{\label{thm:main}}
Let $\mathcal{M}=\frac{1}{k}\sum_{i=1}^k\mathcal{N}(\mu_i,\sigma^2)$
be a uniform mixture of $k$ univariate Gaussians, with known shared
covariance $\sigma^2$ and with distinct means $\mu_i \in
\epsilon\mathbb{Z}$. Then there exists an algorithm that requires $k^3
\exp(O((\sigma/\epsilon)^{2/3}))$ samples from $\mathcal{M}$ and
exactly identifies the parameters $\{\mu_i\}_{i=1}^k$ with high
probability.
\end{theorem}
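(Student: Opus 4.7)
The plan is to execute the three-step recipe laid out in the overview: (i) encode mixture differences as values of Littlewood-type polynomials on the unit circle, (ii) use a complex-analytic lower bound to show such polynomials are not too small on a small arc around $1$, producing a total-variation gap between any two distinct mixtures, and (iii) feed that TV gap into a minimum-distance (Scheffe) estimator.

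For step (i), I would define $G_t(x) := e^{itx + t^2\sigma^2/2}$; since $\sigma$ is known this is a legitimate observable, and $\mathbb{E}_{\mathcal{N}(\mu,\sigma^2)}[G_t(X)] = e^{it\mu}$. Writing $\mu_i = \epsilon n_i$ with $n_i \in \mathbb{Z}$ and $z := e^{it\epsilon}$, for any two uniform mixtures $\mathcal{M}, \mathcal{M}'$ with mean-multisets $\{n_i\}$ and $\{n_i'\}$,
$$\mathbb{E}_\mathcal{M}[G_t] - \mathbb{E}_{\mathcal{M}'}[G_t] \;=\; \frac{1}{k}\Big(\sum_i z^{n_i} - \sum_i z^{n_i'}\Big) \;=:\; \frac{1}{k}P(z),$$
where $P$ has coefficients in $\{-1,0,+1\}$ and $P \not\equiv 0$ exactly when $\mathcal{M} \neq \mathcal{M}'$. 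Combining this with $|\mathbb{E}_\mathcal{M}[G_t] - \mathbb{E}_{\mathcal{M}'}[G_t]| \le 2\|G_t\|_\infty \cdot \mathrm{TV}(\mathcal{M},\mathcal{M}') = 2e^{t^2\sigma^2/2}\,\mathrm{TV}(\mathcal{M},\mathcal{M}')$ yields the key inequality
$$\mathrm{TV}(\mathcal{M}, \mathcal{M}') \;\geq\; \frac{|P(e^{it\epsilon})|}{2k\, e^{t^2\sigma^2/2}}.$$

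For step (ii), I would choose $t$ so that the numerator is not too small while the denominator is controlled, by invoking a Borwein--Erdelyi-style lower bound: for any nonzero polynomial $P$ with coefficients in $\{-1,0,+1\}$ there exists $z$ on the arc $\{e^{i\theta} : |\theta| \leq \phi\}$ with $|P(z)| \geq \exp(-O(1/\phi))$. Setting $|t\epsilon| = \phi$ and optimizing $1/\phi + \phi^2\sigma^2/(2\epsilon^2)$ over $\phi$ gives $\phi = \Theta((\epsilon/\sigma)^{2/3})$ and hence $\mathrm{TV}(\mathcal{M},\mathcal{M}') \geq \exp(-O((\sigma/\epsilon)^{2/3}))/(2k)$. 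The hard part of the proof lives here: producing the correct arc version of the Littlewood polynomial bound (in the spirit of the trace-reconstruction work of Nazarov--Peres and De--O'Donnell--Servedio cited in the introduction) and carrying out the balancing that produces the $2/3$ exponent.

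For step (iii), set $\delta := \exp(-O((\sigma/\epsilon)^{2/3}))/(2k)$. I would first use a coarse preprocessing step (running $\mathrm{poly}(k)$ samples through standard concentration on the empirical mean/variance) to restrict to a bounded range of candidate means, obtaining a finite class $\mathcal{F}$ of uniform $k$-mixtures of size $\exp(O(k))$. Running the minimum-distance (Scheffe) estimator over $\mathcal{F}$, as in \cite[Sec.~6.8]{devroye2012combinatorial}, requires $O(\log|\mathcal{F}|/\delta^2)$ samples to return a mixture within TV-distance $O(\delta)$ of the truth; step (ii) guarantees no other member of $\mathcal{F}$ is this close, so the output is exactly correct. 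Plugging in gives the claimed $k^3 \exp(O((\sigma/\epsilon)^{2/3}))$ bound, with one factor of $k$ coming from $\log|\mathcal{F}|$ and an extra $k^2$ from $1/\delta^2$.
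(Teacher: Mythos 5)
Your steps (i) and (ii) are essentially the paper's argument. Folding the factor $e^{t^2\sigma^2/2}$ into $G_t$ rather than into the expectation is cosmetic: either way one gets $\variation{\Mc-\Mc'} \ge \frac{1}{2k}\,e^{-t^2\sigma^2/2}\,|P(e^{it\epsilon})|$ with $P$ a nonzero $\{-1,0,+1\}$-polynomial (the distinctness of the means within each mixture is what keeps the coefficients in $\{-1,0,+1\}$), and the Borwein--Erd\'elyi arc bound (Lemma~\ref{lem:npb}) plus the balancing $1/\phi \sim \phi^2\sigma^2/\epsilon^2$ gives exactly the paper's $k^{-1}\exp(-\Omega((\sigma/\epsilon)^{2/3}))$ separation from Theorem~\ref{thm:tv}.

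Step (iii) is where you depart from the paper, and where there is a genuine gap. The paper deliberately does \emph{not} use the finite-class union bound for Gaussians (it states this is ``not possible'' here, precisely because $\epsilon\ZZ$ is infinite); instead it runs the minimum distance estimator over the Scheffe class $\Ac=\{\{x:\Mc(x)\ge\Mc'(x)\}\}$ and bounds $VC(\Ac)=O(k)$ using the fact that a linear combination of $2k$ Gaussian densities has at most $4k-2$ zero-crossings \citep{kalai2012disentangling}, so every Scheffe set is a union of $O(k)$ intervals. This yields estimator error $O(\sqrt{k/m})$ uniformly over all mixtures with means in $\epsilon\ZZ$, and hence $m=k^3\exp(O((\sigma/\epsilon)^{2/3}))$ with no dependence on where the means sit. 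Your substitute --- localize the means by preprocessing, then union-bound over a finite class --- does not deliver the stated theorem as written: after localizing each mean to a window of width $R$, the class of candidate mixtures has size $(R/\epsilon)^k$, not $\exp(O(k))$, so $\log|\mathcal{F}| = k\log(R/\epsilon)$, and $R$ necessarily depends on the magnitudes of the unknown means (which are unbounded in $\epsilon\ZZ$). You would either need to justify a scale-free localization or, as the paper does, replace the counting argument with the VC/zero-crossing argument. The rest of the accounting ($k^2$ from $1/\delta^2$, one $k$ from the complexity of the class) is consistent with the paper once that substitution is made.
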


\begin{theorem}[Learning Poisson mixtures]
Let $\mathcal{M} = \frac{1}{k}\sum_{i=1}^k\textrm{Poi}(\lambda_i)$
where $\lambda_i \in \{0,1,\ldots,N\}$ for each $i$ are distinct. Then there exists
an algorithm that that requires $\exp(O(N^{1/3})))$ samples 
from $\mathcal{M}$ to exactly identify the parameters
$\{\lambda_i\}_{i=1}^k$ with high probability.
\end{theorem}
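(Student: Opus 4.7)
The plan is to execute the three-step recipe from the beginning of the section, with a test function tailored to Poisson rates. Since the probability generating function of $\textrm{Poi}(\lambda)$ is $\mathbb{E}[z^X]=e^{\lambda(z-1)}$, the substitution $z=1+it$ yields the complex-valued test function
\begin{equation*}
G_t(x) \;=\; (1+it)^{x}, \qquad \mathbb{E}_{X \sim \textrm{Poi}(\lambda)}[G_t(X)] \;=\; e^{i\lambda t}.
\end{equation*}
For any two candidate uniform mixtures $\mathcal{M}=\tfrac{1}{k}\sum_i \textrm{Poi}(\lambda_i)$ and $\mathcal{M}'=\tfrac{1}{k}\sum_i \textrm{Poi}(\lambda_i')$ with distinct integer rates in $\{0,\ldots,N\}$, writing $u=e^{it}$ the gap of expectations becomes
\begin{equation*}
\mathbb{E}_{\mathcal{M}}[G_t(X)]-\mathbb{E}_{\mathcal{M}'}[G_t(X)] \;=\; \tfrac{1}{k}\,q(u), \qquad q(u)=\sum_{j=0}^{N}c_j u^j,
\end{equation*}
where $c_j=|\{i:\lambda_i=j\}|-|\{i:\lambda_i'=j\}|$. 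Distinctness within each mixture forces $c_j\in\{-1,0,1\}$, and $\mathcal{M}\neq\mathcal{M}'$ ensures $q\not\equiv 0$, so $q$ is a non-trivial Littlewood polynomial of degree at most $N$.

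The core analytic step is a Borwein--Erd\'elyi-type lower bound for such polynomials on short arcs of the unit circle, in the spirit of the arguments used by \cite{NazarovP17} and \cite{DeOS17} for trace reconstruction and population recovery: for $t_0=\Theta(N^{-1/3})$ there exists $t^{*}\in[-t_0,t_0]$ with $|q(e^{it^{*}})|\ge \exp(-CN^{1/3})$. I would import the appropriate version of this statement essentially off the shelf.

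To convert this gap into a lower bound on total variation, I use the elementary test-function inequality $\bigl|\mathbb{E}_{\mathcal{M}}[G]-\mathbb{E}_{\mathcal{M}'}[G]\bigr|\le 2\,\|G\|_{\infty}\, d_{\mathrm{TV}}(\mathcal{M},\mathcal{M}')$, restricted to a high-probability truncation. By Poisson concentration, every sample is at most $M=O(N+\log(1/\delta))$ except with probability $\delta$, and on this event $|G_{t^{*}}(x)|\le (1+t_{*}^{2})^{M/2}\le \exp(O(t_{*}^{2}M))=\exp(O(N^{1/3}))$. Combining the two estimates yields $d_{\mathrm{TV}}(\mathcal{M},\mathcal{M}')\ge \exp(-C' N^{1/3})$. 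Finally, I feed this separation into the minimum-distance (Scheff\'e) estimator over the finite family of $\binom{N+1}{k}\le 2^{N+1}$ candidate rate-sets: the standard analysis in \cite[Section~6.8]{devroye2012combinatorial} shows that $O(\log|\mathcal{F}|/\eta^{2}) = \exp(O(N^{1/3}))$ samples suffice to return a mixture whose TV distance from the truth is strictly smaller than the separation, which forces exact recovery of $\{\lambda_i\}$ with high probability.

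The main obstacle, and the source of the cube-root exponent, is the second step: calibrating the arc radius $t_{0}\sim N^{-1/3}$ so that both the exponential growth $\exp(O(t_0^2 N))$ of $\|G_{t_0}\|_{\infty}$ on typical Poisson samples and the exponential decay $\exp(-O(N t_0))$-type contribution from the Littlewood lower bound come out at the same scale $\exp(\Theta(N^{1/3}))$. A naïve application of Tur\'an-type bounds would give only the inferior rate $\exp(\Theta(\sqrt{N}))$; the refined Borwein--Erd\'elyi estimate on small arcs is what unlocks the stated $\exp(O(N^{1/3}))$ sample complexity.
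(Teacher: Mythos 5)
Your proposal follows essentially the same route as the paper's proof: the same test function $G_t(x)=(1+it)^x$, the same reduction to a Littlewood polynomial with coefficients in $\{-1,0,1\}$ handled by the Borwein--Erd\'elyi short-arc bound with $L=N^{1/3}$, the same truncation of the sample space to control $\|G_t\|_{\infty}$ against the Poisson tail, and the same minimum-distance estimator with a union bound over the finite family of candidate parameter sets. The only blemish is in your closing heuristic: the Littlewood lower bound decays like $\exp(-O(1/t_0))$, not $\exp(-O(Nt_0))$, though your calibration $t_0\sim N^{-1/3}$ and the resulting $\exp(O(N^{1/3}))$ rate are correct.
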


There are some technical differences in deriving the results for
Gaussian vs Poisson mixtures.  Namely, because of finite choice of
parameters we can take a union bound over the all possible incorrect
mixtures for the latter case, which is not possible for Gaussian. For
Gaussian mixtures we instead use an approach based on VC dimension.
The results for negative binomial mixtures and chi-squared mixtures
(shown in Table~\ref{tab:tab1}) follow the same route as the Poisson
mixture.  As reported in Table~\ref{tab:tab1}, this approach also
yields results for mixtures of binomial distributions that we obtained in a different context in our prior work~\citep{krishnamurthy2019trace}.


%


\subsection{Total Variation and Characteristic Functions}
Let $\{f_\theta\}_{\theta \in \Theta}$ denote a parameterized family
of distributions over a sample space $\Omega \subset \mathbb{R}$,
where $f_\theta$ denotes either a pdf or pmf, depending on the
context.  We call $\Mc$ a (finite) \emph{$\Theta$-mixture} if $\Mc$
has pdf/pmf $\sum_{\theta \in \Ac}\alpha_\theta f_\theta$ and $\Ac
\subset \Theta, |\Ac| =k$. For a distribution with density $f$ (we use
distribution and density interchangeably in the sequel), define the
\emph{characteristic function} $C_f(t) \equiv \mathbb{E}_{X \sim
  f}[e^{itX}]$. For any two distribution $f,f'$ defined over a sample
space $\Omega \subseteq \mathbb{R}$ the variational distance (or the
TV-distance) is defined to be
$\variation{f - f'} \equiv \frac{1}{2}\int_{\Omega}
\left|\frac{df'}{df} - 1\right| df$.
For a function $G:\Omega \to \CC$ define the $L_{\infty}$ norm to be
$\|G\|_{\infty} = \sup_{\omega \in \Omega} |G(\omega)|$ where
$|\cdot|$ denotes the modulus.

As a first step, our aim is to show that the total variation distance
between $\Mc=\sum_{\theta \in \Ac}\alpha_\theta f_\theta$ and any
other mixture $\Mc'$ given by $\sum_{\theta \in \Bc}\beta_\theta
f_\theta, \Bc \subset \Theta, |\Bc| =k$ is lower bounded. The
following elementary lemma completes the first step of the outlined
approach.

\begin{lemma}\label{lem:chartv}
For any two distributions $f,f'$ defined over the same sample space $\Omega\subseteq\mathbb{R}$, we have
\begin{align*}
\variation{f -f'} \ge \frac{1}{2} \sup_{t \in \reals}|C_f(t) -C_{f'}(t)|.
\end{align*}
More generally, for any $G: \Omega \to \CC$ and $\Omega' \subset \Omega$ we have
\begin{align*}
\variation{f -f'} \geq \left(2\sup_{x \in \Omega'}|G(x)|\right)^{-1} \Big(\left| \EE_{X \sim f} G(X) - \EE_{X \sim f'}G(X) \right| \\\qquad \,\,- \int_{x \in\Omega \setminus \Omega'}|G(x)| \cdot |df(x) - df'(x)|\Big).
\end{align*}
\end{lemma}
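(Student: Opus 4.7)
The plan is to prove the second (more general) statement first, and then derive the first statement as a corollary by taking $G(x) = e^{itx}$ and $\Omega' = \Omega$. With that choice, $\sup_{x \in \Omega'}|G(x)| = 1$, the integral over $\Omega \setminus \Omega'$ vanishes trivially, and $\mathbb{E}_f G - \mathbb{E}_{f'} G = C_f(t) - C_{f'}(t)$; taking a supremum over $t \in \reals$ then gives the first inequality.

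For the second statement I would start from the identity
\begin{align*}
\EE_{X \sim f} G(X) - \EE_{X \sim f'} G(X) = \int_\Omega G(x)\,(df(x) - df'(x)),
\end{align*}
split the domain as $\Omega = \Omega' \cup (\Omega \setminus \Omega')$, and apply the triangle inequality to get
\begin{align*}
\left|\EE_f G - \EE_{f'} G\right| \le \left|\int_{\Omega'} G(x)\,(df - df')\right| + \int_{\Omega \setminus \Omega'} |G(x)|\,|df - df'|.
\end{align*}

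The first of these two terms is where the TV distance enters. I would pull out the sup of $|G|$ on $\Omega'$ and use the elementary identity $\int_\Omega |df - df'| = 2\,\variation{f - f'}$ (with $\int_{\Omega'} |df - df'| \le \int_\Omega |df - df'|$) to bound it by $2 \sup_{x \in \Omega'}|G(x)| \cdot \variation{f - f'}$. Substituting and rearranging yields exactly the claimed inequality.

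There is no genuine obstacle here; the lemma is essentially a clean triangle-inequality bookkeeping exercise on the integral representation of $\EE_f G - \EE_{f'} G$. The only mild subtlety is writing the TV distance in its signed-measure form $\frac{1}{2}\int|df - df'|$ (equivalently the Jordan decomposition), so that the bound $\int_{\Omega'}|G|\,|df - df'| \le \sup_{\Omega'}|G| \cdot 2\variation{f-f'}$ is justified in one line. Everything else is immediate.
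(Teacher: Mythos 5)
Your proposal is correct and follows essentially the same route as the paper: bound $|\EE_f G - \EE_{f'}G|$ by splitting the integral of $|G|\,|df-df'|$ into the piece over $\Omega'$ (controlled by $2\sup_{\Omega'}|G|\cdot\variation{f-f'}$) and the piece over $\Omega\setminus\Omega'$, then specialize to $G(x)=e^{itx}$ for the first inequality. The only cosmetic difference is that you apply the triangle inequality after splitting the domain rather than before, which changes nothing.
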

\begin{proof}
We prove the latter statement, which implies the former since for the
function $G(x) = e^{itx}$ we have $\sup_{x} |G(x)| = 1$. 
We have
\begin{align*}
|\EE_{X \sim f} G(X) &- \EE_{X \sim f'}G(X)| \leq \int_{x \in \Omega}|G(x)| \cdot |df(x) - df'(x)| \\
& \le 2\sup_{x \in \Omega'}|G(x)| \cdot \variation{f -f'} + \int_{x \in\Omega \setminus \Omega'}|G(x)| \cdot |df(x) - df'(x)|. \tag*\qedhere
\end{align*}
\end{proof}

Equipped with the lower bound in Lemma~\ref{lem:chartv}, for each type
of distribution, we set out to find a good function $G$ to witness
separation in total variation distance. As we will see shortly, for a
parametric family $f_\theta$, it will be convenient to find a family
of functions $G_t$ such that
\begin{align*}
\EE_{X \sim f_\theta}[G_t(X)] = \exp(it\theta).
\end{align*}
Of course, to apply Lemma~\ref{lem:chartv}, it will also be important
to understand $\|G_t\|_{\infty}$. While such functions are specific to
the parametric model in question, the remaining analysis will be
unified. We derive such functions and collect the relevant properties
in the following lemma. At a high level, the calculations are based on
reverse engineering from the characteristic function, e.g., finding a
choice $t'(t)$ such that $C_f(t') = \exp(it\theta)$.

\begin{lemma}\label{lem:list}
Let $z=\exp(i t)$ where $t \in [-\pi/L,\pi /L]$. 
\begin{itemize}
\item { Gaussian}. If $X\sim \Nc(\mu,\sigma)$ and $G_t(x) = e^{itx}$ then 
\[\EE[G_t(X)]=\exp(-\sigma^2 t^2/2)z^\mu \mbox{ and } \|G_t\|_{\infty}= 1 \ . \]
\item { Poisson}. If $X\sim \textrm{Poi}(\lambda)$ and $G_t(x)=(1+it)^x$ then 
\[\EE[G_t(X)]=z^\lambda \mbox{ and } |G_t(x)| \leq (1+t^2)^{x/2} \ . \]
\item { Chi-Squared}. If $X\sim \chi^2(\ell)$ and $G_t(x)=\exp(x/2-xe^{-2it}/2)$  then 
\[ \EE[G_t(X)]=z^\ell \mbox{ and } |G_t(x)|\leq e^{cxt^{2}+O(xt^{4})} \ .\]
\item { Negative Binomial}. If $X\sim \textrm{NB}(r,p)$ and $G_t(x)=\left({1}/{p}-({1}/{p}-1)e^{-it}\right)^x$ then 
\[\EE[G_t(X)]=z^r \mbox{ and } |G_t(x)|\leq e^{-cx\frac{(1-p)t^2}{p^2}} \ .\]
\end{itemize}
\end{lemma}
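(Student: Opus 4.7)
The plan is to verify each of the four cases by direct computation. Each case has two parts: (a) an expectation identity, obtained by substituting the proposed $G_t$ into the known characteristic function or moment generating function of the distribution; and (b) a modulus bound, obtained by an explicit computation of $|G_t(x)|$ followed where necessary by a Taylor expansion around $t = 0$.

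For the \textbf{Gaussian} case I would just invoke the standard characteristic function $\EE[e^{itX}] = \exp(it\mu - \sigma^2 t^2/2)$, and the identity $|e^{itx}| = 1$ is immediate. For \textbf{Poisson}, substituting $u = 1+it$ into the probability generating function $\EE[u^X] = e^{\lambda(u-1)}$ yields $\EE[G_t(X)] = e^{i\lambda t} = z^{\lambda}$, and $|1+it|^x = (1+t^2)^{x/2}$ is direct from $|1+it|^2 = 1+t^2$. For \textbf{Chi-Squared}, I would use the MGF $\EE[e^{sX}] = (1-2s)^{-\ell/2}$ with the complex substitution $s = (1 - e^{-2it})/2$, so that $1-2s = e^{-2it}$ and the expectation collapses to $z^{\ell}$; the modulus bound then follows from $\mathrm{Re}(s) = (1 - \cos 2t)/2 = \sin^2 t$ and the expansion $\sin^2 t = t^2 + O(t^4)$. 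For \textbf{Negative Binomial}, substituting $u = 1/p - (1/p - 1)e^{-it}$ into $\EE[u^X] = ((1-p)/(1-pu))^r$ and observing $1 - pu = (1-p)e^{-it}$ gives $\EE[G_t(X)] = z^r$ directly; the modulus bound follows from the algebraic identity $|u|^2 = 1 + 2(1-p)(1-\cos t)/p^2$, combined with $1 - \cos t \le t^2/2$ and $\log(1+y) \le y$.

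The only technical care needed is in justifying absolute convergence of the defining series or integrals after the (complex) substitution, so that $\EE[G_t(X)]$ can legitimately be read off from the formal generating function. This reduces to checking $\mathrm{Re}(s) < 1/2$ in the Chi-Squared case and $|pu| < 1$ in the Negative Binomial case, both of which are automatic from the identities $1 - 2s = e^{-2it}$ and $1 - pu = (1-p)e^{-it}$ respectively; Gaussian and Poisson are unconditional. Beyond this bookkeeping, there is no substantial obstacle: the lemma is really a catalogue of ``reverse-engineered'' choices of $G_t$, each designed precisely so that the generating-function evaluation at the corresponding complex argument yields exactly $z^{\theta}$, and the modulus bounds fall out of elementary manipulations of $|G_t(x)|^2$.
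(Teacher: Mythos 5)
Your proposal is correct and follows essentially the same route as the paper's own proof: each identity is read off by evaluating the distribution's generating function at the reverse-engineered complex argument (e.g.\ $1-2s=e^{-2it}$ for $\chi^2$, $1-pu=(1-p)e^{-it}$ for the negative binomial), and each modulus bound comes from an elementary computation of $|G_t(x)|^2$ plus a small-$t$ expansion. Your extra attention to absolute convergence is welcome (though the conditions $\mathrm{Re}(s)<1/2$ and $|pu|<1$ are not literally ``automatic'' from those identities --- they hold only because $|t|\le\pi/L$ is small); note also that your derivation yields $|G_t(x)|\le e^{+cx(1-p)t^2/p^2}$ for the negative binomial, which is the bound the paper actually uses downstream, the minus sign in the lemma statement being a typo.
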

\begin{proof}
Here we give the argument for Poisson distributions only. The
remaining calculations are deferred to the appendix.  For Poisson
random variables, if $G_t(x) = (1+it)^x$ then since $|1+it|^2 = 1 +
t^2$ the second claim follows. For the first:
\begin{align*}
\EE[G_t(X)]=\exp(\lambda((1+it)-1))=z^\lambda.\tag*\qedhere
\end{align*}
\end{proof}


\subsection{Variational Distance Between Mixtures}
We crucially use the following lemma. 
\begin{lemma}[\cite{BorweinE97}] \label{lem:npb} Let $a_0, a_1, a_2, \dots \in \{0,1,-1\}$ be such that not all of them are zero. For any complex number $z$, let $A(z) \equiv \sum_k a_k z^k.$ Then, for some absolute constant $c$, 
$$
\max_{-\pi/L \le t \le \pi/L} |A(e^{it})| \ge e^{-cL} \ . 
$$
\end{lemma}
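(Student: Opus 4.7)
The plan is to follow the Borwein--Erd\'elyi approach via potential theory on subarcs of the unit circle. First I would reduce to the case $a_0 \neq 0$: letting $m = \min\{k : a_k \neq 0\}$, factor $A(z) = z^m B(z)$, and since $|z^m| = 1$ on the unit circle we have $|A(e^{it})| = |B(e^{it})|$. Because the coefficients lie in $\{-1,0,1\}$, this gives $|B(0)| = 1$, so it suffices to prove that for any polynomial $P$ with coefficients of modulus at most $1$ and $|P(0)| = 1$,
\[
\max_{|t|\le \pi/L} |P(e^{it})| \ge e^{-cL},
\]
with $c$ an absolute constant.

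The main step is to establish this Chebyshev-type inequality with a constant \emph{independent of $\deg P$}; this is crucial because the Littlewood polynomial may be arbitrarily long, and standard Chebyshev-on-an-interval bounds degrade with the degree. The mechanism is the two-constants / Phragm\'en--Lindel\"of principle applied to the subharmonic function $\log|P|$. Writing $M := \max_{z \in I_L}|P(z)|$ for $I_L := \{e^{it} : |t|\le \pi/L\}$ and $\omega(0)$ for the harmonic measure of $I_L$ from the origin relative to an appropriate domain avoiding $I_L$, the principle yields
\[
\log|P(0)| \le \omega(0)\log M + (1 - \omega(0))\log\|P\|_{\infty}.
\]
The quantitative input is the geometric estimate $\omega(0) \asymp 1/L$, obtainable from an explicit Joukowski-type conformal map that sends the complement of $I_L$ to a standard domain whose Green's function is computable.

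The principal obstacle is precisely this degree-independence: a naive application of the two-constants theorem \emph{inside the unit disk} would force us to use $\|P\|_{\infty} \le \deg P + 1$ on the complementary boundary arc, producing a useless degree-dependent lower bound like $(\deg P + 1)^{-(L-1)}$. The Borwein--Erd\'elyi trick is to instead work on the exterior of $I_L$ and appeal to the Green's function of that domain, whose growth at infinity is controlled purely by the logarithmic capacity of $I_L$, not by the degree of $P$. This is the technical heart of their paper, and I would invoke that construction rather than reprove it. Once the exterior-domain harmonic measure bound is available, substituting $|P(0)| = 1$ and $\omega(0) \asymp 1/L$ into the displayed inequality and rearranging gives $M \ge e^{-cL}$, which is the claim.
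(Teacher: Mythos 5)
The paper does not actually prove this lemma; it is imported verbatim from Borwein and Erd\'elyi, so deferring the technical heart to that reference is legitimate, and your reduction to the case $|P(0)|=1$ and your use of the two-constants theorem with $\omega(0)=1/L$ for the arc are both correct. You also correctly diagnose the central obstacle: running the two-constants theorem on the unit disk forces the bound $\|P\|_{\infty}\le \deg P+1$ on the complementary arc and yields only $M\ge(\deg P+1)^{-(L-1)}$, which is useless.

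However, the mechanism you propose to remove the degree dependence does not work as described. Passing to the exterior of the arc $I_L$ and using its Green's function with pole at infinity is the Bernstein--Walsh estimate, $|P(0)|\le\|P\|_{I_L}\exp\bigl(n\,g_{I_L}(0)\bigr)$ with $n=\deg P$; the logarithmic capacity of $I_L$ controls $g_{I_L}$, but the degree $n$ still multiplies it in the exponent, so this route gives $\|P\|_{I_L}\ge (CL)^{-n}$ --- exactly the degree-dependent bound you were trying to avoid. The actual Borwein--Erd\'elyi mechanism is different: the coefficient bound $|a_k|\le 1$ gives the \emph{degree-free interior} growth estimate $|A(z)|\le (1-|z|)^{-1}$ for $|z|<1$, and the harmonic-measure argument is run on a region strictly inside the disk (a shrunk circle of radius $\rho$, with $\rho$ optimized, joined to a neighborhood of the arc), where $1/(1-\rho)$ replaces $\deg P+1$ as the bound on the far part of the boundary. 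That substitution, not capacity, is what decouples the final bound from the degree; if you carried out your plan literally you would not recover $e^{-cL}$.
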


We will also need the following `tail bound' lemma.
\begin{lemma}\label{lem:tail}
Suppose $a>1$ is any real number  and $r \in \reals_+$. 
For any discrete random variable $X$ with support $\integers$ and pmf $f$,
$$
\sum_{x\ge r} a^x f(x) \le \frac{\EE[a^{2X}]}{a^{r-1}}.
$$
\end{lemma}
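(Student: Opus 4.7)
The plan is to recognize this statement as a standard exponential (Chernoff-style) tail bound on a discrete distribution, where the role of the moment generating function is played by $\EE[a^{2X}]$. The key algebraic step is the pointwise inequality
$$a^x \;=\; \frac{a^{2x}}{a^x} \;\le\; \frac{a^{2x}}{a^r}\qquad\text{for every }x \ge r,$$
which is valid whenever $a>1$ because then $a^x \ge a^r$ on the tail. This trades the varying factor $a^x$ for the fixed constant $a^r$, at the price of an extra $a^x$ in the numerator — and that extra factor is precisely what will produce the second-moment-like quantity $\EE[a^{2X}]$ after averaging.

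Having established this inequality, I would multiply by $f(x) \ge 0$ and sum over $x \ge r$, then enlarge the range of summation to the entire support of $X$ using nonnegativity of the summands:
$$\sum_{x \ge r} a^x f(x) \;\le\; a^{-r}\sum_{x \ge r} a^{2x}f(x) \;\le\; a^{-r}\sum_{x \in \integers} a^{2x}f(x) \;=\; \frac{\EE[a^{2X}]}{a^{r}}.$$
Since $a>1$ implies $a^{-r} \le a^{-(r-1)}$, the slightly weaker bound $\EE[a^{2X}]/a^{r-1}$ stated in the lemma follows immediately. (The looser normalization is presumably chosen for notational convenience when invoking the lemma later.)

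I do not anticipate any real obstacle: the whole argument collapses to one line once the multiplicative trick is spotted, and can alternatively be phrased as applying Markov's inequality to the nonnegative random variable $a^{2X}$ together with Cauchy–Schwarz on $\EE[a^{X}; X\ge r]$, or directly as $\{X\ge r\}\subseteq\{a^{2X}\ge a^{2r}\}$ combined with a truncation estimate. The only implicit hypothesis is that $\EE[a^{2X}]$ be finite, which will have to be checked in each specific application (Poisson, negative binomial, binomial, etc.); for those parametric families the moment generating function converges on all of $\reals$, so the bound is always non-vacuous.
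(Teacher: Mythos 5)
Your proof is correct, but it takes a genuinely different route from the paper's. The paper first writes $\Pr(X=x)\le\Pr(X\ge x)$, applies Markov's inequality in the form $\Pr(X\ge x)=\Pr(a^{2X-2x}\ge 1)\le \mathbb{E}[a^{2X-2x}]$, factors out $\mathbb{E}[a^{2X}]$, and finally bounds the geometric series $\sum_{x\ge r}a^{-x}$ by $a^{-(r-1)}$. You instead use the one-line pointwise domination $a^{x}\le a^{2x-r}$ for $x\ge r$ and then extend the sum to all of $\mathbb{Z}$ by nonnegativity. Your route is shorter, yields the slightly stronger constant $\mathbb{E}[a^{2X}]/a^{r}$, and --- notably --- avoids the paper's last step, which is actually not an inequality for all $a>1$: for integer $r$ one has $\sum_{x\ge r}a^{-x}=\frac{a^{1-r}}{a-1}$, which exceeds $a^{1-r}$ whenever $1<a<2$ (the paper invokes the lemma with $a=\sqrt{2}$, where the discrepancy is a harmless constant absorbed into the $\exp(-\Omega(N))$ error terms, but as stated that step needs $a\ge 2$). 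So your argument is both more elementary and more robust; your closing caveat about finiteness of $\mathbb{E}[a^{2X}]$ is also the right thing to flag for the applications.
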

\begin{proof}
Note that, $\Pr(X \ge x) =  \Pr(a^{2X-2x} \ge 1) \le \EE[a^{2X-2x}]$. We have,
\begin{align*}
\sum_{x\ge r} a^x \Pr(X =x) &\le \sum_{x\ge r} a^x \Pr(X \ge x)
\le  \sum_{x\ge r} a^x \EE[a^{2X-2x}] =  \EE[a^{2X}] \sum_{x\ge r} a^{-x} \le \frac{\EE[a^{2X}]}{a^{r-1}}.\tag*\qedhere
\end{align*}
\end{proof}

\begin{theorem}[TV Lower Bounds] \label{thm:tv} The following bounds hold on distance between two different mixtures assuming all $k$ parameters are distinct for each mixture.
\begin{itemize}[leftmargin=10pt]
\item \emph{Gaussian:} $\mathcal{M} =
  \frac{1}{k}\sum_{i=1}^k \Nc(\mu_i,\sigma)$
and 
$\mathcal{M}' =
  \frac{1}{k}\sum_{i=1}^k \Nc(\mu_i',\sigma)$
where 
$\mu_i, \mu_i' \in \epsilon \ZZ$.
Then
\[
\variation{\mathcal{M}' -\mathcal{M}} \geq k^{-1} \exp(-\Omega((\sigma/\epsilon)^{2/3})) \ .
\]
\item \emph{Poisson:} $\mathcal{M} =
  \frac{1}{k}\sum_{i=1}^k \textrm{Poi}(\lambda_i)$
and 
$\mathcal{M}' =
  \frac{1}{k}\sum_{i=1}^k \textrm{Poi}(\lambda_i')$
where 
$\lambda_i, \lambda_i' \in \{0, 1,\ldots, N\}$.
Then \[\variation{\mathcal{M}' -\mathcal{M}} \geq k^{-1} \exp(-\Omega(N^{1/3})) \ .\]
\item \emph{Chi-Squared:}  $\mathcal{M} =
  \frac{1}{k}\sum_{i=1}^k \chi^2(\ell_i)$
and 
$\mathcal{M}' =
  \frac{1}{k}\sum_{i=1}^k \chi^2(\ell_i')$
where  $\ell_i, \ell_i' \in \{ 1, 2, \ldots, N\}$. 
Then \[ \variation{\mathcal{M}' -\mathcal{M}} \geq k^{-1} \exp(- \Omega(N^{1/3})) \ .\] 
\item \emph{Negative Binomial:} 
$\mathcal{M} =
  \frac{1}{k}\sum_{i=1}^k \textrm{NB}(r_i,p)$
and 
$\mathcal{M}' =
  \frac{1}{k}\sum_{i=1}^k \textrm{NB}(r_i',p)$
where $r_i, r_i' \in \{1, 2, \ldots , N \}$. Then
\[\variation{\mathcal{M}' -\mathcal{M}} \geq  k^{-1} \exp(- \Omega((N/p)^{1/3})) \ .\]

\end{itemize}
\end{theorem}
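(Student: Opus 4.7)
The plan is to combine Lemmas~\ref{lem:chartv}, \ref{lem:list}, and~\ref{lem:npb} (with Lemma~\ref{lem:tail} in the discrete cases) and then optimize a single free parameter $L$. For each family, Lemma~\ref{lem:list} gives a function $G_t$ for which $\EE_{X \sim f_\theta}[G_t(X)] = \phi(t)\,z^{\theta}$ where $z = e^{it}$ and $\phi(t)$ is a deterministic prefactor: $\phi(t) = e^{-\sigma^2 t^2/2}$ for Gaussian and $\phi(t) \equiv 1$ for Poisson, chi-squared, and negative binomial. Writing $\Ac \neq \Bc$ for the two parameter sets, linearity of the mixture average gives
\begin{equation*}
\EE_{\Mc}[G_t(X)] - \EE_{\Mc'}[G_t(X)] = \frac{\phi(t)}{k}\, P(z), \qquad P(z) = \sum_{\theta \in \Ac} z^{\theta} - \sum_{\theta \in \Bc} z^{\theta}.
\end{equation*}
Since $|\Ac| = |\Bc| = k$ and $\Ac \neq \Bc$, after shifting all exponents to be nonnegative (which does not alter $|P|$ on the unit circle) the polynomial $P$ is a nonzero Littlewood polynomial with coefficients in $\{-1,0,1\}$. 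In the Gaussian case the exponents lie in $\epsilon\ZZ$, so I first substitute $w = z^{\epsilon}$ and work with $P(w)$ instead.

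Applying Lemma~\ref{lem:npb} to $P$ produces a frequency $t^\star$ with $|t^\star| \leq \pi/L$ (respectively $|t^\star| \leq \pi/(L\epsilon)$ in the Gaussian case) at which $|P(e^{it^\star})| \geq e^{-cL}$. Substituting into Lemma~\ref{lem:chartv} yields
\begin{equation*}
\variation{\Mc - \Mc'} \geq \frac{1}{2\sup_{x \in \Omega'}|G_{t^\star}(x)|}\left(\frac{|\phi(t^\star)|}{k}\,e^{-cL} - R\right),
\end{equation*}
where $R$ is the tail contribution over $\Omega \setminus \Omega'$. For Gaussian, $|G_t| \equiv 1$, so $\Omega' = \Omega$ and $R = 0$. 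For the three discrete families, the bounds of Lemma~\ref{lem:list} all take the form $|G_t(x)| \leq a_t^{x}$ with $a_t^2 = 1 + O(t^2)$ (for example $a_t = \sqrt{1+t^2}$ for Poisson, so that $\EE[a_t^{2X}] = e^{\lambda t^2}$), so I restrict to $\Omega' = \{0, \ldots, r\}$, bound $\sup_{\Omega'}|G_{t^\star}| \leq a_{t^\star}^{r}$, and use Lemma~\ref{lem:tail} to control $R$ by a constant multiple of $\EE[a_{t^\star}^{2X}]/a_{t^\star}^{r-1}$.

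It then remains to optimize $r$ and $L$. For Gaussian, balancing $|\phi(t^\star)| = e^{-\sigma^2 (t^\star)^2/2}$ against $e^{-cL}$ at $t^\star = \pi/(L\epsilon)$ gives $L \sim (\sigma/\epsilon)^{2/3}$ and the claimed $k^{-1}\exp(-\Omega((\sigma/\epsilon)^{2/3}))$. For the discrete families, choosing $r = \Theta(N)$ drives $R$ below half the main term, after which the exponent to be minimized has the form $cL + O(N/L^2)$, optimized by $L \sim N^{1/3}$ (respectively $L \sim (N/p)^{1/3}$ for negative binomial), producing the stated rates. The main obstacle is precisely this three-way balance in the discrete setting: $L$ must be small enough for Lemma~\ref{lem:npb} to give a useful lower bound, $t^\star = \pi/L$ small enough that the denominator $\sup_{\Omega'}|G_{t^\star}|$ does not destroy the main term, and $r$ large enough to crush the tail $R$ while still small enough not to blow up that same denominator. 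The common cube-root exponent across Poisson, chi-squared, and negative binomial is forced by the common $\exp(c\, x\, t^2)$ leading-order growth of $|G_t|$ in these three families.
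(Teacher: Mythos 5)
Your proposal follows essentially the same route as the paper: Lemma~\ref{lem:list} turns the difference of mixture expectations into $\phi(t)\,P(e^{it})/k$ for a nonzero Littlewood polynomial $P$, Lemma~\ref{lem:npb} lower-bounds $\sup |P|$ on the arc $|t|\le \pi/L$, Lemma~\ref{lem:chartv} with a truncated $\Omega'$ plus a tail estimate converts this into a TV lower bound, and the same balancing choices $L\sim N^{1/3}$ (resp.\ $(N/p)^{1/3}$, and $(\sigma/\epsilon)^{2/3}$ after the rescaling $w=z^{\epsilon}$) give the stated rates. The one small caveat is that the chi-squared component is continuous, so Lemma~\ref{lem:tail} as stated (for integer-supported variables) does not apply verbatim there; the paper instead bounds $\int_{x>2N} e^{ct^2 x} f(x)\,dx$ directly via a chi-squared tail bound, which is the same Markov-type estimate and does not change the argument.
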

\begin{proof}
As above we give the argument for Poisson random variables, deferring the others to the appendix. 
Let $X \sim \Mc$ and $X' \sim \Mc'$. Then, for $w=1+it$, from Lemma~\ref{lem:list},
$$
\EE(w^X) - \EE(w^{X'}) = \frac1k\sum_{j=1}^k (e^{it \lambda_j} - e^{it \lambda'_j}) .
$$

Now we use  
 Lemma~\ref{lem:chartv} with $G(x) = w^x$, $\Omega' = \{0,1, \dots, 2N\}$ and $t \le 1$, to have, 
\begin{align*}
| \EE(w^X) - \EE(w^{X'})| &= \Big|\sum_x (w^x \Mc(x) - w^x \Mc'(x)) \Big|
 \le \sum_x |w|^x  |\Mc(x) - \Mc'(x)| \\
& \le  (1+t^2)^{2N} \sum_x  |\Mc(x) - \Mc'(x)| + \sum_{x> 4N} (1+t^2)^{x/2}e^{-N}\frac{N^x}{x!}\\
&\le (1+t^2)^{2N} \sum_x  |\Mc(x) - \Mc'(x)| + \sum_{x> 4N} 2^{x/2}e^{-N}\frac{N^x}{x!}.
 \end{align*}
Now using Lemma \ref{lem:tail}, 
\begin{align*}
| \EE(w^X) - \EE(w^{X'})|& \le  2(1+t^2)^{2N} \variation{\Mc -\Mc'} + \frac{\EE[2^X]}{2^{2N-1/2}}
\le 2e^{2t^2N} \variation{\Mc -\Mc'} + \frac{e^{N} }{2^{2N-1/2}}\\
& =2e^{2\pi^2N/L^2} \variation{\Mc -\Mc'} + \exp(-\Omega(N)),
 \end{align*}
 by taking  $|t| \le \frac{\pi}{L}$. 
Now using Lemma~\ref{lem:npb}, there exist an absolute constant $c$ such that,
\begin{align*}
\max_{-\frac{\pi}{ L}\le t \le \frac{\pi}{ L}} \big| \sum_{j=1}^k (e^{it \lambda_j} - e^{it \lambda'_j})\big| \ge e^{-cL}.
\end{align*}
Therefore by setting $L = N^{1/3}$,
\begin{align*}
 \variation{\Mc -\Mc'} \ge  (2k)^{-1} e^{-cL-2\pi^2N /L^2} - \exp(-\Omega(N)) \ge  k^{-1}\exp(- \Omega(N^{1/3})).\tag*\qedhere
\end{align*}

\end{proof}

 \subsection{Parameter Learning}\label{sec:imp}
%

 \paragraph{Union Bound Approach for Discrete Distributions}
 We begin with the following proposition which follows from Theorem 7.1 of~\citet{devroye2012combinatorial}.
 \begin{lemma}\label{lem:uni}
 Suppose $F= \{f_\nu\}_{\nu \in \Theta}$ is a class of distribution such that for any $\nu,\nu' \in \Theta$, $\variation{f_\nu-f_{\nu'}} \ge \delta$. Then  $O(\log |\Theta|/\delta^2)$ samples from a distribution $f$ in $F$ suffice to distinguish it from all other distributions in $F$ with high probability.
 \end{lemma}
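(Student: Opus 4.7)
The plan is to invoke the minimum distance (Scheffé) estimator, which is exactly the approach behind Theorem~7.1 of \citet{devroye2012combinatorial}. For every pair $\nu,\nu' \in \Theta$, define the \emph{Scheffé set} $A_{\nu,\nu'} = \{x : f_\nu(x) > f_{\nu'}(x)\}$. A standard identity for total variation gives $|f_\nu(A_{\nu,\nu'}) - f_{\nu'}(A_{\nu,\nu'})| = \variation{f_\nu - f_{\nu'}} \geq \delta$ by the hypothesis of the lemma. There are at most $\binom{|\Theta|}{2}$ such sets, and these are the only sets we will ever need to measure empirically.

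Next, suppose the unknown distribution is $f = f_{\nu^*}$ for some $\nu^* \in \Theta$, and let $\widehat{P}_n$ denote the empirical measure of $n$ i.i.d.\ samples from $f$. For any fixed set $A$, Hoeffding's inequality gives $\Pr(|\widehat{P}_n(A) - f(A)| > \delta/4) \le 2 \exp(-n\delta^2/8)$. Taking a union bound over the at most $|\Theta|^2$ Scheffé sets and choosing $n = c \log |\Theta| / \delta^2$ for a sufficiently large absolute constant $c$, with high probability we have
\begin{equation*}
|\widehat{P}_n(A_{\nu,\nu'}) - f(A_{\nu,\nu'})| \le \delta/4 \qquad \text{for every pair } \nu,\nu' \in \Theta.
\end{equation*}

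The estimator is $\widehat{\nu} = \arg\min_{\nu \in \Theta} \max_{\nu' \in \Theta} |\widehat{P}_n(A_{\nu,\nu'}) - f_\nu(A_{\nu,\nu'})|$. Plugging in the true parameter $\nu^*$ and using the event above, $\max_{\nu'} |\widehat{P}_n(A_{\nu^*,\nu'}) - f_{\nu^*}(A_{\nu^*,\nu'})| \le \delta/4$. Conversely, for any incorrect $\nu \neq \nu^*$, choose $\nu' = \nu^*$; by the triangle inequality
\begin{equation*}
|\widehat{P}_n(A_{\nu,\nu^*}) - f_\nu(A_{\nu,\nu^*})| \ge |f_{\nu^*}(A_{\nu,\nu^*}) - f_\nu(A_{\nu,\nu^*})| - |\widehat{P}_n(A_{\nu,\nu^*}) - f_{\nu^*}(A_{\nu,\nu^*})| \ge \delta - \delta/4 = 3\delta/4,
\end{equation*}
so every incorrect $\nu$ achieves objective strictly larger than $\nu^*$, forcing $\widehat{\nu} = \nu^*$.

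The only step that requires care is the union bound: we need concentration for $|\Theta|^2$ sets simultaneously, and Hoeffding's sub-Gaussian tail pays only a logarithmic price, explaining the $\log|\Theta|$ factor in the sample complexity. Nothing else is subtle, since the separation $\delta$ and the deterministic identity for $A_{\nu,\nu'}$ together handle both the "correct" and "incorrect" cases.
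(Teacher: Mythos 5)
Your proof is correct and takes essentially the same route as the paper, which simply cites Theorem~7.1 of Devroye--Lugosi (the Scheff\'e/minimum-distance estimator) without writing out the argument; you have filled in exactly the details behind that citation. The Scheff\'e identity, the Hoeffding-plus-union-bound step over the at most $|\Theta|^2$ Scheff\'e sets, and the $\delta/4$ versus $3\delta/4$ comparison separating the true parameter from all others are all as intended, so nothing is missing.
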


For the mixture of Poissons, $\mathcal{M} =
\frac{1}{k}\sum_{i=1}^k \textrm{Poi}(\lambda_i)$ where $\lambda_i \in
\{0, 1,\ldots, N\},$ the number of choices for parameters in the
mixture is $(N+1)^k$. Now using Lemmas~\ref{thm:tv} and \ref{lem:uni},
$\exp(O(N^{1/3}))$ samples are sufficient to learn the
parameters of the mixture.

Exactly the same argument applies to
mixtures of Chi-Squared and Negative-Binomial
distributions, yielding $\exp(O(N^{1/3}))$ and $
\exp(O((N/p)^{1/3}))$ samples suffice, respectively. However, for Gaussians we need a more intricate approach. 


 \paragraph{VC Approach for Gaussians} To learn the parameters of a
 Gaussian mixture
 \[\mathcal{M} =
  \frac{1}{k}\sum_{i=1}^k \Nc(\mu_i,\sigma) ~~\mbox{ where }~~\mu_i \in \{\ldots, -2\epsilon, -\epsilon, 0,\epsilon, 2\epsilon \ldots \} \]
  we use the minimum distance estimator precisely defined in \cite[Section~6.8]{devroye2012combinatorial}. Let $\Ac \equiv \{\{x: \Mc(x) \ge \Mc'(x)\}: \text{ for any two mixtures } \Mc \ne \Mc'\}$ be a collection of subsets. Let $P_m$ denote the empirical probability measure induced by the $m$ samples. Then, choose a mixture $\hat{\Mc}$ for which 
  the quantity $\sup_{A\in \Ac}  |\Pr_{\sim\hat{\Mc}}(A) - P_m(A) |$ is minimum (or within $1/m$ of the infimum). This is the minimum distance estimator, whose performance is guaranteed by the following proposition~\cite[Thm.~6.4]{devroye2012combinatorial}.
  

  
  \begin{proposition}
    Given $m$ samples from $\Mc$ and with $\Delta = \sup_{A \in \Ac}|\Pr_{\sim\Mc}(A) - P_m(A)|$, we have
  $$
  \variation{\hat{\Mc} -\Mc} \le 4\Delta +\frac{3}{m}.
  $$
\end{proposition}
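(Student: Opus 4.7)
The plan is to follow the standard Scheff\'e / minimum-distance-estimator argument, which is fundamentally a sequence of triangle inequalities combined with the structural observation that $\Ac$ is rich enough to \emph{witness} the total variation distance between any two mixtures. In particular, since both $\hat{\Mc}$ and $\Mc$ are mixtures, the Scheff\'e set $A^\star = \{x : \hat{\Mc}(x) \ge \Mc(x)\}$ lies in $\Ac$, and the standard TV identity gives $\variation{\hat{\Mc} - \Mc} = |\Pr_{\sim \hat{\Mc}}(A^\star) - \Pr_{\sim \Mc}(A^\star)|$. The whole proof hangs on this identity, which fails if $\Ac$ is too coarse but holds here essentially by design.

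Then I would route this difference through the empirical measure $P_m$ by inserting $\pm P_m(A^\star)$ and applying the triangle inequality:
\begin{align*}
\variation{\hat{\Mc} - \Mc} \le |\Pr_{\sim \hat{\Mc}}(A^\star) - P_m(A^\star)| + |P_m(A^\star) - \Pr_{\sim \Mc}(A^\star)|.
\end{align*}
The second term is at most $\Delta$ directly from the definition of $\Delta$, since $A^\star \in \Ac$. For the first term, I would invoke the (approximate) optimality of $\hat{\Mc}$: since $\Mc$ itself is a valid candidate in the minimization defining $\hat{\Mc}$,
\begin{align*}
\sup_{A \in \Ac}|\Pr_{\sim \hat{\Mc}}(A) - P_m(A)| \le \inf_{\Mc'}\sup_{A \in \Ac}|\Pr_{\sim \Mc'}(A) - P_m(A)| + \tfrac{1}{m} \le \Delta + \tfrac{1}{m}.
\end{align*}
Put together, this already yields a bound of the form $2\Delta + 1/m$.

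The slightly weaker constants $4\Delta + 3/m$ in the statement come from the more general form of Devroye--Lugosi Theorem~6.4, which is written to handle the case where the true density might not lie in the fitted class: one first introduces a best-in-class approximant $g^\star$, applies the triangle inequality through $g^\star$, and only then routes through $P_m$. Each additional triangle inequality step, together with each invocation of the approximate infimum, contributes another copy of $\Delta$ and another $1/m$ error, and collecting these yields the $4\Delta + 3/m$ bound (when the best-in-class error term vanishes, as it does here since $\Mc$ is itself a mixture). The main obstacle is purely bookkeeping with the constants and the $1/m$ slack from the approximate minimizer; no new analytic ingredient beyond the Scheff\'e identity and repeated triangle inequalities is required, and in particular no concentration of $\Delta$ is needed at this stage, as that will be handled separately via the VC-dimension of $\Ac$.
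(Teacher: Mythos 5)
Your proof is correct: the paper does not prove this proposition but cites it directly as Theorem~6.4 of Devroye--Lugosi, and your argument is exactly the standard Scheff\'e-set/triangle-inequality derivation behind that theorem (identity $\variation{\hat{\Mc}-\Mc}=|\Pr_{\sim\hat{\Mc}}(A^\star)-\Pr_{\sim\Mc}(A^\star)|$ for $A^\star\in\Ac$, route through $P_m$, then use the $1/m$-approximate optimality of $\hat{\Mc}$ with $\Mc$ as a competitor). As you note, since $\Mc$ itself lies in the fitted class here, your chain already yields the sharper bound $2\Delta+1/m$, which implies the stated $4\Delta+3/m$; the looser constants in the proposition merely reflect the misspecified form of the theorem in the cited reference.
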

We now upper bound the right-hand side of the above inequality.  Via
McDiarmid's inequality and a standard symmetrization argument, $\Delta$ is concentrated around its mean which is a function of $VC(\Ac)$, the VC dimension of the class $\Ac$, see
\cite[Section~4.3]{devroye2012combinatorial}:  
$$
\variation{\hat{\Mc} - \Mc} \leq 4\Delta + O(1/m) \leq 4\avg_{\sim\Mc} \Delta + O(1/\sqrt{m}) \le c \sqrt{\frac{VC(\Ac)}{m}},
$$ with high probability, for an absolute constant $c$.  This latter term is bounded by the following. 

 
\begin{lemma}
For the class $\Ac$ defined above, the VC dimension is given by $VC(\Ac) = O(k)$.
\end{lemma}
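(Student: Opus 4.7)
The plan is to show that every set in $\mathcal{A}$ is a union of $O(k)$ intervals on the real line, and then apply the standard fact that unions of at most $m$ intervals form a set system of VC dimension $2m$.

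Fix two distinct Gaussian mixtures $\mathcal{M} = \frac{1}{k}\sum_{i=1}^k \mathcal{N}(\mu_i,\sigma^2)$ and $\mathcal{M}' = \frac{1}{k}\sum_{i=1}^k \mathcal{N}(\mu_i',\sigma^2)$ with the same variance. Then
\[
\mathcal{M}(x) - \mathcal{M}'(x) \;=\; \frac{e^{-x^2/(2\sigma^2)}}{k\sqrt{2\pi}\,\sigma}\cdot F(x),
\qquad F(x) \;=\; \sum_{i=1}^k e^{\mu_i x/\sigma^2 - \mu_i^2/(2\sigma^2)} - \sum_{i=1}^k e^{\mu_i' x/\sigma^2 - \mu_i'^2/(2\sigma^2)}.
\]
Since the common Gaussian prefactor is strictly positive, $\{x : \mathcal{M}(x) \geq \mathcal{M}'(x)\} = \{x : F(x) \geq 0\}$. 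After collecting terms and canceling any shared exponent $\mu_i/\sigma^2 = \mu_j'/\sigma^2$, the function $F$ is of the form $\sum_{j=1}^{m} c_j e^{\beta_j x}$ with $m \leq 2k$ pairwise distinct real exponents $\beta_j$ and nonzero real coefficients $c_j$.

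The next step is to invoke the classical bound (provable by an induction using Rolle's theorem applied to $F(x) e^{-\beta_1 x}$ and its derivatives) that such an exponential polynomial has at most $m - 1 \leq 2k-1$ real zeros. Consequently $F$ changes sign at most $2k-1$ times on $\mathbb{R}$, so $\{x : F(x) \geq 0\}$ is a disjoint union of at most $k$ (possibly unbounded) intervals. Hence $\mathcal{A}$ is contained in the family $\mathcal{I}_k$ of unions of at most $k$ intervals on $\mathbb{R}$.

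Finally, a standard Sauer--Shelah style computation (or a direct combinatorial argument: any shattered set of size $n$ must admit all $\binom{n}{2k}$ dichotomies consisting of $k$ intervals, forcing $n = O(k)$) gives $VC(\mathcal{I}_k) = 2k$, and therefore $VC(\mathcal{A}) \leq VC(\mathcal{I}_k) = O(k)$. The only real obstacle is the zero-counting step; everything else reduces to standard facts about intervals. The shared-variance hypothesis is essential here, as it is exactly what allows the quadratic term in the exponent to factor out and leave a pure exponential sum.
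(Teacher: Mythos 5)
Your proof is correct, and it follows the same overall skeleton as the paper's (bound the number of sign changes of $\Mc - \Mc'$, conclude each $A \in \Ac$ is a union of $O(k)$ intervals, then invoke the standard $\Theta(k)$ VC bound for unions of $k$ intervals), but the key zero-counting step is genuinely different. The paper cites the result of Kalai, Moitra, and Valiant that a linear combination of $k$ Gaussian pdfs with \emph{arbitrary} variances has at most $2k-2$ zero-crossings, which applied to the $2k$ components of $\Mc - \Mc'$ gives at most $4k-2$ sign changes and hence at most $4k-1$ intervals. You instead exploit the shared variance to factor out the common prefactor $e^{-x^2/(2\sigma^2)}$, reducing the problem to counting zeros of an exponential sum $\sum_{j=1}^m c_j e^{\beta_j x}$ with $m \le 2k$ distinct exponents, for which the elementary Rolle's-theorem induction gives at most $2k-1$ zeros and hence at most $k$ intervals. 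Your route is self-contained and gives a slightly sharper constant, at the cost of relying essentially on the shared-variance hypothesis (as you note); the paper's route is a black-box citation but would survive if the components had distinct variances. Two minor points: you should note explicitly that since $\Mc \ne \Mc'$ the multisets of means differ, so at least one term of $F$ survives cancellation and the zero bound applies to a genuinely nonzero exponential sum; and your parenthetical sketch of the lower-level combinatorial argument for $VC(\mathcal{I}_k) = 2k$ is garbled (the clean argument is that the alternating dichotomy on $2k+1$ points requires $k+1$ intervals), though the stated conclusion is the standard correct one.
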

\begin{proof}
First of all we show that any element of the set $\Ac$ can be written
as union of at most $4k-1$ intervals in $\reals$. For this we use the
fact that a linear combination of $k$ Gaussian pdfs $f(x) =
\sum_{i=1}^{k} \alpha_{i} f_{i}(x)$ where $f_i$s normal pdf
$\Nc(\mu_i,\sigma^2_i)$ and $\alpha_i \in \reals, 1\le i\le k$ has at
most $2k-2$ zero-crossings \citep{kalai2012disentangling}. Therefore,
for any two mixtures of interest $\Mc(x) -\Mc'(x)$ has at most $4k-2$
zero-crossings. Therefore any $A\in \Ac$ must be a union of at most
$4k-1$ contiguous regions in $\reals$.  It is now an easy exercise to
see that the VC dimension of such a class is $\Theta(k)$.
\end{proof}
%
%


As a result the error of the minimum distance estimator is $O(\sqrt{k/m})$ with high probability. 
But from Theorem~\ref{thm:tv}, notice that for any other mixture $\Mc'$ we must have,
$$
\variation{\Mc -\Mc'} \ge  k^{-1} \exp(-\Omega((\sigma/\epsilon)^{2/3})).
$$ 
As long as $\variation{\hat{\Mc} -\Mc} \le \frac12 \variation{\Mc -\Mc'}$ we will exactly identify the parameters. Therefore 
$m = k^3 \exp(O((\sigma/\epsilon)^{2/3}))$ samples suffice to exactly learn the parameters with high probability.



\subsection{Extension to Non-Uniform Mixtures}
The above results extend to non-uniform mixtures, where the main change is that we require a generalization of Lemma~\ref{lem:npb}. The result, also proved by \cite{BorweinE97}, states that if $a_0, a_1, a_2, \ldots \in [-1, 1]$ with $\textrm{poly}(n)$ precision then $\max_{-\pi/L \le \theta \le \pi/L} |A(e^{i\theta})| \ge e^{-cL \log n}$, for an absolute constant $c$. This weaker bound yields an extra $\textup{poly}(n)$ factor in the sample complexity.  

\section{Learning Mixtures via Moments}

There are some mixtures where the problem of learning parameters is not amenable to the approach in the previous section. A simple motivating example is learning the parameters $p_i\in \{0, \epsilon, 2\epsilon, 3\epsilon, \ldots, 1\}$ values\footnote{Note that we are implicitly assuming $1/\epsilon$ is integral here and henceforth.} in the mixture $\mathcal{M}=\frac{1}{k} \sum_{i=1}^k \textup{Bin}(n,p_i)$. In this section, we present an alternative procedure for learning such mixtures. The basic idea is as follows:
\begin{itemize}
\item We compute moments $\avg X^\ell$ exactly for $\ell=0,1,\ldots, T$ by taking sufficiently many samples. The number of samples will depend on $T$ and the precision of the parameters of the mixture.
\item We argue that if $T$ is sufficiently large, then these moments uniquely define the parameters of the mixture. To do this we use a combinatorial result due to \cite{KRASIKOV1997344}.
\end{itemize}

In this section, it will be convenient to define a function $m_\ell$ on multi-sets where 
\[m_\ell (A):=\sum_{a\in A} a^\ell  \ .\]

Our main result is as follows:

\begin{theorem}[Learning Binomial mixtures]
Let $\mathcal{M}=\frac{1}{k}\sum_{i=1}^k\textrm{Bin}(n,p_i)$
be a uniform mixture of $k$ binomials, with known shared
number of trials $n$ and unknown probabilities $p_1, \ldots, p_k \in \{0,\epsilon,2\epsilon, \ldots, 1\}$. Then, provided $n \ge 4/\sqrt{\epsilon}$, the first $4/\sqrt{\epsilon}$ moments suffice to learn the parameters $p_i$ and  there exists an algorithm that, when given
$O(k^2 (n/\epsilon)^{8/\sqrt{\epsilon}})$ samples from
$\mathcal{M}$, exactly identifies the parameters $\{p_i\}_{i=1}^k$
with high probability.
\end{theorem}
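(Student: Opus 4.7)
The plan follows the two-step template described at the start of the section: (i) show that the first $T := 4/\sqrt{\epsilon}$ exact moments of $\Mc$ determine the multiset $\{p_1,\ldots,p_k\}$, and (ii) bound how many samples suffice to estimate these moments to the required precision.

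For step (i), the $\ell$-th factorial moment of $\textup{Bin}(n,p)$ equals $n^{\underline{\ell}} p^\ell$, so for the mixture
\[
\mu_\ell \;:=\; \EE_{X \sim \Mc}\bigl[X(X-1)\cdots(X-\ell+1)\bigr] \;=\; \frac{n^{\underline{\ell}}}{k}\, m_\ell\bigl(\{p_1,\ldots,p_k\}\bigr).
\]
The hypothesis $n \ge 4/\sqrt{\epsilon}$ guarantees $n^{\underline{\ell}} \ne 0$ for every $\ell \le T$, and ordinary moments $\EE[X^\ell]$ are related to the $\mu_\ell$ by an invertible triangular change of basis (Stirling numbers). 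Hence knowledge of $\EE[X^\ell]$ for $\ell \le T$ is equivalent to knowledge of the power sums $m_\ell(\{p_1,\ldots,p_k\})$ for $\ell \le T$. Writing $p_i = c_i\epsilon$ with $c_i \in \{0,1,\ldots,1/\epsilon\}$, we have $m_\ell(\{p_i\}) = \epsilon^\ell\, m_\ell(\{c_i\})$, so identifiability reduces to a purely combinatorial question: can two distinct multisets of size $k$ drawn from $\{0,1,\ldots,1/\epsilon\}$ agree on $m_\ell$ for every $\ell \le T$? The sequence-reconstruction result of \cite{KRASIKOV1997344} says no, precisely when $T \ge 4\sqrt{1/\epsilon}$, which pins down $\{p_i\}$ exactly.

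For step (ii), each empirical moment $\widehat{\EE}[X^\ell] = \frac{1}{s}\sum_{j=1}^s X_j^\ell$ concentrates around $\EE[X^\ell]$ at rate $O(n^\ell/\sqrt{s})$ by Hoeffding's inequality, since $X^\ell \in [0, n^\ell]$. Because the factorial moment $\mu_\ell$ takes values on a lattice of spacing $n^{\underline{\ell}}\epsilon^\ell/k$, it suffices to estimate each $\EE[X^\ell]$ to additive error smaller than this spacing divided by the maximum entry of the inverse Stirling matrix. A direct calculation shows that taking $s = \Theta\bigl(k^2 (n/\epsilon)^{2\ell}\bigr)$ samples suffices at level $\ell$; the worst case $\ell = T = 4/\sqrt{\epsilon}$, combined with a union bound over $\ell \le T$, yields the claimed $O\bigl(k^2(n/\epsilon)^{8/\sqrt{\epsilon}}\bigr)$ sample complexity.

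The main obstacle is step (i): invoking the Krasikov-type reconstruction bound, which is what nails down the value $T=4/\sqrt{\epsilon}$ rather than something much larger. Everything else (the factorial-moment identity, the triangular Stirling inversion, Hoeffding concentration, and lattice rounding) is routine, although one must verify that the entries of the inverse Stirling matrix, bounded by roughly $T! \le \exp(O(\log(1/\epsilon)/\sqrt{\epsilon}))$, are absorbed into constants without perturbing the leading exponent $8/\sqrt{\epsilon}$ in the sample complexity.
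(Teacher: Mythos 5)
Your proposal is correct and, at the level of overall architecture, matches the paper's proof: estimate the first $T=4/\sqrt{\epsilon}$ raw moments to precision fine enough that a lattice argument pins them down exactly, convert exact moments into the power sums $m_\ell(\{p_i/\epsilon\})$, and invoke the Krasikov--Roditty reconstruction theorem to conclude identifiability. Where you genuinely diverge is in how the conversion step is carried out. The paper proves separately (its Lemma on binomial moments, via the recurrence $\avg X^{k+1}=np\,\avg X^k+p(1-p)\frac{d}{dp}\avg X^k$) that $\avg X^{\ell}$ is a polynomial in $p$ of degree \emph{exactly} $\ell$ when $n\ge\ell$, and then runs an induction ("after some algebraic manipulation") showing that matching moments up to order $\ell$ forces matching power sums up to order $\ell$. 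Your factorial-moment identity $\avg[X^{\underline{\ell}}]=n^{\underline{\ell}}p^{\ell}$ makes both facts transparent at once: the leading coefficient $n^{\underline{\ell}}$ is visibly nonzero for $n\ge\ell$, and the triangular Stirling change of basis is the induction in closed form. The one issue you flag --- whether the inverse-Stirling entries inflate the required precision --- resolves cleanly: $\sum_j |s(\ell,j)|=\ell!$ while the lattice spacing of the factorial moment is $n^{\underline{\ell}}\epsilon^{\ell}/k=\binom{n}{\ell}\ell!\,\epsilon^{\ell}/k$, so the $\ell!$ cancels and the required accuracy on each raw moment is $\epsilon^{\ell}/(2k)$, exactly the threshold the paper uses; your Hoeffding bound (versus the paper's Chebyshev) then gives the same $O(k^2(n/\epsilon)^{2\ell})$ per-level count and the same total. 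One caveat you share with the paper: the Krasikov theorem as stated is for \emph{subsets} of $\{0,\dots,n-1\}$, so both arguments implicitly assume the $p_i$ are distinct; handling genuine multisets requires the paper's multiplicity variant with $2\sqrt{qn\log qn}$ moments.
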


\paragraph{Computing the Moments}
We compute the $\ell$th moment as $S_{\ell,t}=\sum Y_i^\ell /t$ where $Y_1, \ldots, Y_t\sim X$. 
\begin{lemma}\label{lem:boundmoments}
$\Pr[|S_{\ell,t}-\avg X^\ell|\geq \gamma] \leq \frac{\avg X^{2\ell}}{t\gamma^2} \leq \frac{(2\ell)! } {\gamma^2 t} \inf_\alpha \left ( \frac{\avg e^{\alpha X}}{\alpha^{2\ell}} \right )$ where the last inequality assumes the all the moments of $X$ are non-negative. 
\end{lemma}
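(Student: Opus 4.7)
The plan is to prove the two inequalities separately, each by a standard but short argument. The first inequality is a straightforward application of Chebyshev's inequality to the empirical average $S_{\ell,t} = \frac{1}{t}\sum_{i=1}^t Y_i^\ell$ of i.i.d.\ copies of $X^\ell$. The second inequality is a moment generating function trick that upper bounds even moments using the MGF, valid under the non-negativity hypothesis.

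First I would observe that since $Y_1,\ldots,Y_t$ are i.i.d.\ copies of $X$, we have $\avg S_{\ell,t} = \avg X^\ell$ and $\Var(S_{\ell,t}) = \Var(X^\ell)/t \le \avg X^{2\ell}/t$. Chebyshev's inequality then yields
\begin{align*}
\Pr[|S_{\ell,t} - \avg X^\ell| \ge \gamma] \le \frac{\Var(S_{\ell,t})}{\gamma^2} \le \frac{\avg X^{2\ell}}{t\gamma^2},
\end{align*}
which is the first claimed inequality.

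For the second inequality, I would expand the exponential as a Taylor series: for any $\alpha > 0$,
\begin{align*}
\avg e^{\alpha X} = \sum_{j \ge 0} \frac{\alpha^j \avg X^j}{j!}.
\end{align*}
Under the assumption that every moment $\avg X^j$ is non-negative, every term in this series is non-negative, so we may single out the $j = 2\ell$ term to conclude
\begin{align*}
\avg e^{\alpha X} \ge \frac{\alpha^{2\ell} \avg X^{2\ell}}{(2\ell)!} \ .
\end{align*}
Rearranging gives $\avg X^{2\ell} \le (2\ell)! \cdot \avg e^{\alpha X}/\alpha^{2\ell}$. Since this holds for every $\alpha > 0$, taking the infimum over $\alpha$ and substituting into the Chebyshev bound yields the second claimed inequality.

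There is essentially no obstacle here; both steps are elementary. The only subtlety is that the MGF bound requires $\alpha > 0$ (otherwise the term-by-term non-negativity argument fails), so the infimum should implicitly be over positive $\alpha$ — but since later applications will take $\alpha > 0$ (optimizing over the MGF of a non-negative integer-valued random variable like a binomial), this is harmless.
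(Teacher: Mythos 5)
Your proposal is correct and follows essentially the same route as the paper: Chebyshev's inequality applied to the empirical mean for the first bound, and the term-by-term Taylor expansion of $\avg e^{\alpha X}$ (using non-negativity of the moments) for the second. The only difference is that you spell out the MGF step, which the paper merely asserts, and you correctly note the implicit restriction to $\alpha > 0$.
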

\begin{proof}
By the Chebyshev bound, 
\[\Pr[|S_{\ell,t}-\avg X^\ell|\geq \gamma] \leq \frac{Var(S_{\ell,t})}{\gamma^2}=\frac{Var(X^\ell)}{t\gamma^2}
\leq
\frac{\avg X^{2\ell}}{t\gamma^2}.
\]
We then use the moment generating function:
 for all $\alpha>0$, $\avg X^{2\ell}\leq (2\ell)! \avg e^{\alpha X}/\alpha^{2\ell}$.
\end{proof}

The following corollary, tailors the above lemma for a mixture of binomial distributions.
\begin{corollary}\label{bincorol}
If $X\sim \sum_{i=1}^k \textup{Bin}(n,p_i)/k$ then $\Pr[|S_{\ell,t}-\avg X^\ell |\geq \gamma]= \gamma^{-2}n^{2\ell}/t$. 
\end{corollary}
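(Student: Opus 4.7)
The plan is very short: this corollary is a direct instantiation of Lemma~\ref{lem:boundmoments} using the obvious fact that a binomial mixture is almost-surely bounded. First I would recall from Lemma~\ref{lem:boundmoments} the inequality
\[
\Pr[|S_{\ell,t}-\avg X^\ell|\geq \gamma] \leq \frac{\avg X^{2\ell}}{t\gamma^2},
\]
which comes just from Chebyshev's inequality applied to the i.i.d.\ average $S_{\ell,t}$ of $X^\ell$. The only remaining task is to supply a clean upper bound on $\avg X^{2\ell}$ for $X\sim \frac{1}{k}\sum_{i=1}^k \textup{Bin}(n,p_i)$.

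For that, I would not use the moment generating function route suggested by the lemma, because a trivial pointwise bound is tighter and simpler: every component $\textup{Bin}(n,p_i)$ is supported in $\{0,1,\ldots,n\}$, so the mixture satisfies $X\leq n$ almost surely, and hence $X^{2\ell}\leq n^{2\ell}$ almost surely. Taking expectations gives $\avg X^{2\ell}\leq n^{2\ell}$. Substituting into the Chebyshev bound yields
\[
\Pr[|S_{\ell,t}-\avg X^\ell|\geq \gamma] \;\leq\; \frac{n^{2\ell}}{t\gamma^2},
\]
which is exactly the claimed inequality (the $=$ in the statement should be read as $\leq$).

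There is essentially no technical obstacle: the corollary is a one-line consequence of Chebyshev and the almost-sure boundedness of binomial mixtures. The only minor point worth noting in writing the proof is that the generic moment-generating-function estimate in Lemma~\ref{lem:boundmoments} is sharper than needed here, since binomial mixtures are bounded rather than merely light-tailed, so we may skip the optimization over $\alpha$ entirely and just bound the raw moments by $n^{2\ell}$.
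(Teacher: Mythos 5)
Your proof is correct and matches the paper's (implicit) argument: the corollary is stated as a direct specialization of Lemma~\ref{lem:boundmoments}, with the bound $\avg X^{2\ell}\leq n^{2\ell}$ following from the fact that the mixture is supported on $\{0,\ldots,n\}$, exactly as you argue. You are also right that the ``$=$'' in the statement is a typo for ``$\leq$''.
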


Fixing $n$, the $\ell^{\textrm{th}}$ moment of a mixture of binomial distributions $X\sim \sum_{i=1}^k \textup{Bin}(n,p_i)/k$ 
is
\[\avg X^\ell=\sum_{i=1}^k f(p_i)/k\] where $f$ is a polynomial of degree at most $\ell$ with integer coefficients~\citep{belkin2010polynomial}. If $p_i$ is an integer multiple of $\epsilon$ then this implies $k(\avg X^\ell)/\epsilon^\ell$ is integral and therefore any mixture with a different $\ell$th moment differs by at least $\epsilon^\ell/k$. Hence, learning the $\ell$th moment up to $\gamma_\ell<\epsilon^\ell/(2k)$ implies learning the moment exactly. 
\begin{lemma}{\label{lem:ind}}
For $X\sim \textup{Bin}(n,p)$, $\avg X^{\ell}$ is a polynomial in $p$ of degree exactly $\ell$ if $n \ge \ell$.
\end{lemma}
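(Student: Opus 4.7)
The plan is to express $X^\ell$ as a linear combination of falling factorials, since the factorial moments of a binomial admit a clean closed form. Define the falling factorial $X^{\underline{j}} = X(X-1)(X-2)\cdots(X-j+1)$ and let $S(\ell,j)$ denote the Stirling numbers of the second kind. Then the classical identity
\[
X^{\ell} = \sum_{j=0}^{\ell} S(\ell,j)\, X^{\underline{j}}, \qquad S(\ell,\ell)=1
\]
expresses ordinary powers in terms of falling factorials.

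Next, I would compute $\mathbb{E}[X^{\underline{j}}]$ for $X \sim \mathrm{Bin}(n,p)$ by a direct combinatorial argument: since $X^{\underline{j}} = j!\binom{X}{j}$ counts ordered $j$-tuples of distinct successful trials, linearity of expectation over the $n^{\underline{j}}$ ordered $j$-subsets of trials gives
\[
\mathbb{E}[X^{\underline{j}}] = n^{\underline{j}}\, p^{j},
\]
where $n^{\underline{j}} = n(n-1)\cdots(n-j+1)$. Note that $n^{\underline{j}}=0$ whenever $j>n$, so the formula is consistent in all regimes.

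Combining the two displays by linearity yields
\[
\mathbb{E}[X^{\ell}] = \sum_{j=0}^{\ell} S(\ell,j)\, n^{\underline{j}}\, p^{j},
\]
which is manifestly a polynomial in $p$ whose degree is at most $\ell$. The coefficient of $p^{\ell}$ is $S(\ell,\ell)\,n^{\underline{\ell}} = n(n-1)\cdots(n-\ell+1)$, and under the hypothesis $n \ge \ell$ every factor is a positive integer, so this leading coefficient is strictly positive. Hence the degree is exactly $\ell$.

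There is no real obstacle here: the only subtlety is ensuring that the leading coefficient does not vanish, which is precisely where the hypothesis $n \ge \ell$ enters, and it is visible immediately from the factorial-moment formula. An alternative proof via the probability generating function $\mathbb{E}[z^X] = (1-p+pz)^n$ and repeated differentiation at $z=1$ would also work but is less transparent for identifying the leading coefficient.
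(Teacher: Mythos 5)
Your proof is correct, but it takes a different route from the paper. You derive a closed form $\avg X^{\ell} = \sum_{j=0}^{\ell} S(\ell,j)\, n^{\underline{j}}\, p^{j}$ via the Stirling-number expansion of powers into falling factorials together with the factorial-moment identity $\avg[X^{\underline{j}}] = n^{\underline{j}} p^{j}$, and then read off the leading coefficient $S(\ell,\ell)\,n^{\underline{\ell}} = \prod_{i=0}^{\ell-1}(n-i)$ directly. The paper instead argues by induction on $\ell$, using the recurrence $\avg X^{k+1} = np\,\avg X^{k} + p(1-p)\frac{d}{dp}\avg X^{k}$ (from \cite{belkin2010polynomial}) to track only the leading term, showing it equals $\prod_{i=0}^{k}(n-i)p^{k+1}$ at each step. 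Both arguments identify the same leading coefficient and both use the hypothesis $n \ge \ell$ in exactly the same place, namely to ensure $\prod_{i=0}^{\ell-1}(n-i) \neq 0$. Your approach buys an explicit formula for every coefficient (and makes it transparent that all coefficients of the polynomial in $p$ are nonnegative integers times $S(\ell,j)$, consistent with the integrality discussion preceding the lemma), at the cost of invoking the Stirling/falling-factorial machinery; the paper's induction is more self-contained given that it already cites the moment recurrence for binomials, but yields only the top coefficient.
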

The proof of the lemma is relegated to the appendix.
\begin{theorem}
$O(k^2  (n/\epsilon)^{8/\sqrt{\epsilon}})$ samples are sufficient to exactly learn the first $4/\sqrt{\epsilon}$ moments of a uniform mixture of $k$ binomial distributions $\sum_{i=1}^k \textup{Bin}(n,p_i)/k$ with probability at least $7/8$ where each $p_i\in \{0,\epsilon, 2\epsilon, \ldots, 1\}$.
\end{theorem}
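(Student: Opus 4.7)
The plan is to apply Corollary~\ref{bincorol} for each of the $4/\sqrt{\epsilon}$ moments with an appropriately chosen accuracy parameter $\gamma_\ell$, then take a union bound over $\ell \in \{1,2,\ldots,4/\sqrt{\epsilon}\}$. The key observation we have already established (from Lemma~\ref{lem:ind} together with the integrality discussion preceding it) is that, because $k\mathbb{E}X^\ell/\epsilon^\ell$ is an integer and differs between distinct mixtures, it suffices to estimate the $\ell$th moment to additive accuracy strictly less than $\epsilon^\ell/(2k)$ in order to round to the exact value.

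Concretely, I would set $\gamma_\ell = \epsilon^\ell/(2k)$ and invoke Corollary~\ref{bincorol} to get
\[
\Pr\bigl[|S_{\ell,t}-\mathbb{E}X^\ell|\geq \gamma_\ell\bigr] \;\leq\; \frac{n^{2\ell}}{t\gamma_\ell^{2}} \;=\; \frac{4k^{2}}{t}\left(\frac{n}{\epsilon}\right)^{2\ell}.
\]
Picking $L = 4/\sqrt{\epsilon}$ as the largest moment we need, the binding estimate is the one at $\ell = L$ since $(n/\epsilon)^{2\ell}$ is increasing in $\ell$ (using $n \ge 4/\sqrt{\epsilon}$ so $n/\epsilon\ge 1$). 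Choosing
\[
t \;=\; C\,k^{2}\left(\frac{n}{\epsilon}\right)^{8/\sqrt{\epsilon}}
\]
for a sufficiently large absolute constant $C$ makes each individual failure probability at most $1/(8L)$, so a union bound over the $L = 4/\sqrt{\epsilon}$ moments yields overall success probability at least $7/8$.

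Conditioned on this success event, every computed $S_{\ell,t}$ lies within $\epsilon^\ell/(2k)$ of the true moment; rounding the rescaled quantity $k S_{\ell,t}/\epsilon^\ell$ to the nearest integer and dividing back therefore recovers $\mathbb{E}X^\ell$ exactly for every $\ell\le 4/\sqrt{\epsilon}$. This is precisely the statement of the theorem. There is no real obstacle here beyond bookkeeping: the substantive content of the argument (the Chebyshev bound through the moment generating function, the integrality of $k\mathbb{E}X^\ell/\epsilon^\ell$, and the fact that the $\ell$th moment is a genuine degree-$\ell$ polynomial in $p$) has already been set up in Corollary~\ref{bincorol} and Lemma~\ref{lem:ind}; the only mild care needed is to ensure the $1/\sqrt{\epsilon}$ factor lost to the union bound is absorbed into the constant $C$, which is fine since it is polynomial while the dominant term $(n/\epsilon)^{8/\sqrt{\epsilon}}$ is super-polynomial in $1/\epsilon$.
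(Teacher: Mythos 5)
Your proposal is correct and follows essentially the same route as the paper: Chebyshev via Corollary~\ref{bincorol}, the integrality of $k\avg X^\ell/\epsilon^\ell$ to convert $\gamma_\ell<\epsilon^\ell/(2k)$ accuracy into exact recovery, and a union bound over the $4/\sqrt{\epsilon}$ moments. The only (cosmetic) difference is that the paper assigns geometrically decaying failure probabilities $1/9^{1+T-\ell}$ so the per-moment sample counts telescope to exactly $O(k^2(n/\epsilon)^{2T})$, whereas your uniform $1/(8L)$ allocation costs an extra factor of $L=4/\sqrt{\epsilon}$, which is harmless here since it is dominated by the $(n/\epsilon)^{8/\sqrt{\epsilon}}$ term.
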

\begin{proof}
Let $T=4/\sqrt{\epsilon}$. From Corollary \ref{bincorol} and the preceding discussion, learning the $\ell $th moment exactly with failure probability $1/9^{1+T-\ell}$ requires 
\[
t= \gamma_\ell^{-2} n^{2\ell} 9^{1+T-\ell} = O(k^2 9^{1+T-\ell} n^{2\ell}/\epsilon^{2\ell})= O(k^2 9^{T} (n/3\epsilon)^{2\ell})
\]
samples. And hence, we can compute all $\ell$th moments exactly for $1\leq \ell\leq 4/\sqrt{\epsilon}$ using \[\sum_{\ell=1}^T O(k^2 9^{T} (n/3\epsilon)^{2\ell})=O(k^2 (n/\epsilon)^{2T})\] samples with failure probability $\sum_{\ell=1}^T 1/9^{1+T-\ell}<\sum_{i=1}^\infty 1/9^{i}=1/8$.
\end{proof}

\paragraph{How many moments determine the parameters}
It remains to show the first $4/\sqrt{\epsilon}$ moments suffice to determine the $p_i$ values 
in the mixture $X\sim \sum_{i=1}^k \textup{Bin}(n,p_i)/k$ provided $n \ge \frac{4}{\epsilon}$. To do this suppose there exists another mixture $Y\sim \sum_{i=1}^k \textup{Bin}(n,q_i)/k$ and we will argue that 
\[
\avg X^\ell = \avg Y^\ell \mbox{ for } \ell=0, 1, \ldots, 4\sqrt{1/\epsilon} 
\]
implies $\{p_i\}_{i\in [k]}=\{q_i\}_{i\in [k]}$. To argue this, define integers $\alpha_i,\beta_i \in \{0,1,\ldots, 1/\epsilon\}$ such at that $p_i=\alpha_i \epsilon$ and $q_i=\beta_i \epsilon$. Let $\mathcal A=\{\alpha_1, \ldots, \alpha_k\}$ and $\mathcal B=\{\beta_1, \ldots, \beta_k\}$ . Then,
\[
\avg X = \avg Y   \Longrightarrow   \sum_i \alpha_i=\sum_i \beta_i \Longrightarrow m_1(\mathcal A) = m_1(\mathcal B) \]
and, after some algebraic manipulation, it can be shown that for all $\ell\in \{2,3,\ldots\}$, 
\begin{eqnarray*}
\left (\forall \ell'\in \{0,1,\ldots, \ell-1\}~,~ \sum_i \alpha_i^{\ell'} =\sum_i \beta_i^{\ell'} \right )~\mbox{ and }~ \avg X^\ell = \avg Y^\ell \\
 \Longrightarrow  \left ( \sum_i \alpha_i^\ell=\sum_i \beta_i^\ell \right ) \Longrightarrow m_\ell (\mathcal A) = m_\ell(\mathcal B) \ .
\end{eqnarray*}
Hence, if the first $T$ moments match $m_\ell(\mathcal A)= m_\ell(\mathcal B)$ for all $\ell=0,1,\ldots, T$. But the following theorem establishes that if $T=4\sqrt{1/\epsilon}$ then this implies $\mathcal A=\mathcal B$.

\begin{theorem}[\cite{KRASIKOV1997344}]\label{kras}
For any two subsets $S,T$ of $\{0,1,\dots,n-1\}$, then 
\[S=T \mbox{ iff } \left (m_k(S) =  m_k(T)   \mbox{ for all }  k=0,1,\dots,4 \sqrt{n} \right )  \ . \]
\end{theorem}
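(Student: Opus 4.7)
The plan is to reduce Theorem~\ref{kras} to an analytic bound on the maximum order of vanishing at $z=1$ of a polynomial with coefficients in $\{-1,0,1\}$, which is the combinatorial core of the Krasikov--Roditty argument.

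The backward direction is immediate, so I focus on the forward direction. Suppose $m_k(S) = m_k(T)$ for all $k = 0, 1, \ldots, \lfloor 4\sqrt{n}\rfloor$, and set $A = S \setminus T$, $B = T \setminus S$. Because $S$ and $T$ are sets, $A$ and $B$ are disjoint subsets of $\{0, 1, \ldots, n-1\}$, and subtracting the common power sums over $S \cap T$ yields $m_k(A) = m_k(B)$ for the same range of $k$. My goal is to show $A = B = \emptyset$.

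Next I encode the symmetric difference as the polynomial
\[
f(z) = \sum_{a \in A} z^a - \sum_{b \in B} z^b,
\]
which has degree at most $n-1$ and, thanks to disjointness of $A$ and $B$, coefficients in $\{-1,0,1\}$. Applying the Euler operator $(z\tfrac{d}{dz})^k$ and evaluating at $z=1$ produces $m_k(A) - m_k(B)$, which vanishes for every $k \leq K := \lfloor 4\sqrt{n}\rfloor$. Since the functionals $(z\tfrac{d}{dz})^k\big|_{z=1}$ for $k=0,\dots,K$ span the same space as $\tfrac{d^k}{dz^k}\big|_{z=1}$ for $k=0,\dots,K$, this is equivalent to the divisibility $(1-z)^{K+1} \mid f(z)$.

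Finally I would invoke the key analytic lemma: any nonzero polynomial of degree at most $n-1$ with coefficients in $\{-1,0,1\}$ can be divisible by $(1-z)^m$ only if $m \leq c\sqrt{n}$ for an explicit constant $c$ (with $c<4$ yielding the stated theorem). This is in the same spirit as the Borwein--Erd\'elyi estimate cited in Lemma~\ref{lem:npb}, but it concerns the order of vanishing at $z=1$ rather than the size of the polynomial on an arc. Combined with $(1-z)^{4\sqrt{n}+1}\mid f$, this forces $f \equiv 0$, hence $A=B$, and then disjointness gives $A=B=\emptyset$, i.e. $S=T$.

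The main obstacle is this last vanishing-order estimate with a sharp constant. A natural route is an $L^2$ argument on the unit circle: Parseval gives $\tfrac{1}{2\pi}\int_{-\pi}^{\pi}|f(e^{i\theta})|^2\,d\theta = \sum_i |f_i|^2 \leq n$, while factoring $f(z) = (1-z)^m g(z)$ with $g \in \Z[z]$ and $g(1)\neq 0$ introduces the factor $|1-e^{i\theta}|^{2m} = (2\sin(\theta/2))^{2m}$, which is sharply concentrated near $\theta=0$. Combining this with a lower bound on $|g(e^{i\theta})|^2$ on an arc of length $\Theta(1/m)$ around $z=1$ (obtained via Markov- or Chebyshev-type extremal inequalities exploiting $g(1)\in\Z\setminus\{0\}$) yields $m=O(\sqrt{n})$. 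Squeezing out the explicit constant $4$ is the delicate part of Krasikov and Roditty's original proof and would require the additional extremal machinery developed there.
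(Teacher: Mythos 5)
Your reduction is sound and is, in substance, the route of the cited source: encode the symmetric difference as $f(z)=\sum_{a\in A}z^a-\sum_{b\in B}z^b$ with coefficients in $\{-1,0,1\}$, observe that agreement of the power sums up to order $K$ is equivalent to $(1-z)^{K+1}\mid f$, and then bound the order of vanishing at $z=1$ of a nonzero $\{-1,0,1\}$-polynomial of degree less than $n$ by $c\sqrt{n}$ with $c\le 4$. Be aware, though, that the paper itself offers no proof of Theorem~\ref{kras} --- it is imported wholesale from Krasikov and Roditty --- so the only in-paper point of comparison is the appendix proof of the generalization, Theorem~\ref{thm:multiplicity}. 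That argument is entirely different in character: it uses Fermat's little theorem to recover the residue counts $n_{i,p}$ of the multiset modulo primes $p>\sqrt{qn}$, then the Chinese Remainder Theorem and the prime number theorem to show these counts determine the characteristic vector; the price of that elementary route is the weaker threshold $2\sqrt{qn\log qn}$, i.e.\ an extra $\sqrt{\log}$ factor.

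The genuine gap is that your ``key analytic lemma'' \emph{is} the theorem, and the $L^2$ sketch you offer for it does not close. Writing $f=(1-z)^m g$ with $g(1)\in\mathbb{Z}\setminus\{0\}$, near $z=1$ one has $|g(e^{i\theta})|\approx|g(1)|\ge 1$ (not something growing with $m$), so restricting Parseval to an arc of length $\Theta(1/m)$ about $z=1$ yields only $n\ \ge\ \int_{|\theta|\le 1/m}(2\sin(\theta/2))^{2m}|g(e^{i\theta})|^2\,\tfrac{d\theta}{2\pi}\ \gtrsim\ m^{-(2m+1)}$, which is vacuous --- the weight $(2\sin(\theta/2))^{2m}$ is tiny exactly where your lower bound on $|g|$ lives, so the inequality points the wrong way. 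The weight is instead large near $\theta=\pi$, but there you have no integrality to exploit since $g(-1)=f(-1)/2^m$ need not be a nonzero integer. The elementary fixes (evaluating at $p$-th roots of unity and taking norms/resultants) recover only $m=O(\sqrt{n\log n})$, precisely the loss the paper's Theorem~\ref{thm:multiplicity} accepts; eliminating the logarithm and obtaining an explicit constant below $4$ is the actual content of Krasikov--Roditty's extremal argument, which your proposal defers rather than supplies. So: correct and standard reduction, but the theorem itself is not proved.
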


We note that the above theorem is essentially tight. Specifically, there exists $S\neq T$ with $m_k(S) =  m_k(T) $ for $k=0,1,\ldots, cn/\log n$ for some $c$. As a consequence of this, we note that even the exact values of the $c\sqrt{n}/\log n$ moments are insufficient to learn the parameters of the distribution. For an example in terms of Gaussian mixtures, even given the promise $\mu_i \in \{0,1, \ldots, n-1\}$ are distinct, then the first  $c\sqrt{n}/\log n$ moments of $\sum_i \mathcal{N}(\mu_i,1)$ are insufficient to uniquely determine $\mu_i$ whereas the first $4\sqrt{n}$ moments are sufficient.


\subsection{Extension to Non-Uniform Distributions}
We now consider extending the framework to non-uniform distributions. In this case, the method of computing the moments is identical to the uniform case. However, when arguing that a small number of moments suffices we can no longer appeal to the Theorem \ref{kras}.

To handle non-uniform distribution we introduce a precision variable $q$ and assume that the weights of the component distributions $\omega_1, \omega_2, \ldots, \omega_k$ are of the form:
\[\omega_i=\frac{w_i}{\sum_{i=1}^k w_i}\]
where $w_i\in \{0,1,\ldots, q-1\}$. Then, in the above framework if we are trying to learn parameters $\alpha_1, \ldots, \alpha_k$ then the moments are going to define a multi-set consisting of $w_i$ copies of $\alpha_i$ for each $i\in [k]$. To quantify how many moments suffice in this case, we need to prove a variant of Theorem \ref{kras}. The proof is a relatively straight-forward generalization of proof by \cite{Scott97} and can be found in the appendix.

\begin{theorem}\label{thm:multiplicity}
For any two multi-sets $S,T$ where each element is in  $\{0,1,\dots,n-1\}$ and the multiplicity of each element is at most $q-1$, then $S=T$ if and only if  $m_k(S) =  m_k(T)   \mbox{ for all }  k=0,1,\dots,2 \sqrt{qn\log qn}$.
\end{theorem}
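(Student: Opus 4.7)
The approach parallels Scott's treatment of the multiplicity-free case, the key difference being a stronger root-multiplicity bound that accommodates coefficients of size up to $q-1$.

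I would begin by encoding each multi-set by its generating polynomial. For $S$ with elements in $\{0,\dots,n-1\}$ and multiplicity at most $q-1$, set $P_S(x) := \sum_{j=0}^{n-1} c_j(S)\, x^j$, where $c_j(S)$ is the multiplicity of $j$ in $S$. Then $S = T$ if and only if $D(x) := P_S(x) - P_T(x) \equiv 0$, and $D$ is a polynomial of degree at most $n-1$ whose integer coefficients lie in $\{-(q-1),\dots,q-1\}$. Using $(x\tfrac{d}{dx})^k P_S(x)\big|_{x=1} = \sum_j j^k c_j(S) = m_k(S)$, and noting that the functionals $\{(x\tfrac{d}{dx})^k|_{x=1}\}_{0 \le k \le K}$ and $\{\tfrac{d^k}{dx^k}|_{x=1}\}_{0 \le k \le K}$ span the same subspace of the dual of $\mathbb{R}[x]_{\le K}$ (they are related by Stirling-number changes of basis), the condition $m_k(S) = m_k(T)$ for $k = 0,\dots,K$ is equivalent to $D$ having a zero of order at least $K+1$ at $x=1$, i.e., $(x-1)^{K+1} \mid D(x)$.

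The crux of the argument is a coefficient-aware upper bound on the order of vanishing at $x = 1$: if $D$ is a nonzero polynomial of degree at most $n-1$ with integer coefficients bounded by $q-1$ in absolute value and $(x-1)^L \mid D$, then $L \le 2\sqrt{qn\log(qn)}$. Following the Littlewood-problem playbook of Borwein--Erdelyi--Kos, I would write $D = (x-1)^L Q$ so that $|D(e^{i\theta})| = (2|\sin(\theta/2)|)^L |Q(e^{i\theta})|$, and then combine an upper bound $\frac{1}{2\pi}\int |D|^2\, d\theta \le n(q-1)^2$ (Parseval) with a matching lower bound obtained by concentrating the mass of $D$ into a small arc near $1$ via a Chebyshev-type extremal polynomial, exploiting the fact that $D$ takes integer values (hence either zero or at least $1$ in modulus) at a suitably chosen rational point. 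Balancing the two bounds yields the stated $2\sqrt{qn\log(qn)}$; the additional $\sqrt{q\log q}$ compared to the $\{-1,0,1\}$-coefficient case arises because each coefficient now carries $\Theta(\log q)$ bits of information.

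Combining the two steps finishes the proof. If the first $K := \lfloor 2\sqrt{qn\log(qn)}\rfloor$ power moments of $S$ and $T$ coincide, then $(x-1)^{K+1}$ divides $D$ with $K+1 > 2\sqrt{qn\log(qn)}$, so the multiplicity bound forces $D \equiv 0$, whence the multiplicities of $S$ and $T$ agree and $S = T$. The converse is trivial since equal multi-sets have equal moments. The only real obstacle is establishing the coefficient-aware multiplicity bound with the correct constant; everything else is a direct generalization of Scott's argument, with the enlargement of the coefficient range from $\{-1,0,1\}$ to $\{-(q-1),\dots,q-1\}$ producing the extra $\sqrt{q \log q}$ factor.
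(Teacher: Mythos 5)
Your reduction is correct as far as it goes: encoding the multi-sets by $P_S(x)=\sum_j c_j(S)x^j$ and using the Stirling-number change of basis between $\{(x\tfrac{d}{dx})^k\vert_{x=1}\}$ and $\{\tfrac{d^k}{dx^k}\vert_{x=1}\}$ does show that agreement of $m_0,\dots,m_K$ is equivalent to $(x-1)^{K+1}\mid D$. This is also a genuinely different route from the paper's, which never forms a generating polynomial: it recovers, for each prime $p$ with $\sqrt{qn}\le p\le k$ and each residue $i\not\equiv 0\pmod{p}$, the count $n_{i,p}=\sum_{r\equiv i\pmod{p}}a_r$ modulo $p$ from the first $p-1$ moments via Fermat's little theorem, observes that $n_{i,p}<p$ so the count is determined exactly, and then shows by a spanning and counting argument (via the prime number theorem) that these residue-class counts pin down the characteristic vector once $k\gtrsim\sqrt{qn\log qn}$.

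The difficulty is that you have pushed the entire content of the theorem into the ``coefficient-aware multiplicity bound,'' and the mechanism you sketch for it does not work. The Parseval estimate $\frac{1}{2\pi}\int|D(e^{i\theta})|^2\,d\theta\le n(q-1)^2$ is fine, but the proposed matching lower bound is not: integrality of $D$ at a rational point $a/b$ only yields $|D(a/b)|\ge b^{-\deg D}$ when that value is nonzero, which is exponentially small in $n$ and therefore balances against the Parseval bound only to give $L\lesssim n$, not $L\lesssim\sqrt{qn\log qn}$. Nor can you lower-bound $\int|D|^2=\int|2\sin(\theta/2)|^{2L}|Q(e^{i\theta})|^2\,d\theta$ by pushing the mass of $Q$ away from $\theta=0$: the cofactor $Q$ has integer but otherwise uncontrolled coefficients, and (as $Q=(1+x)^{\deg Q}$ shows) can concentrate all but an $e^{-c\deg Q}$ fraction of its $L^2$ mass near $\theta=0$, so this again only excludes $L=\Omega(n)$. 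A bound of the strength you need is true --- it is essentially the Borwein--Erd\'elyi--K\'os theorem on the multiplicity of the zero at $1$ of polynomials with restricted coefficients, which for integer coefficients of height $q-1$ and nonzero constant term gives even $O(\sqrt{n\log q})$ --- but its proof rests on Chebyshev/Remez-type or three-circles estimates, not on Parseval plus integrality at a rational point. You would need either to import that theorem with a precise statement or to supply such an argument; as written, the one lemma that carries the theorem is asserted rather than proved.
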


%
%


\paragraph{Acknowledgements}

The work was partially supported by NSF grants CCF-1909046, CCF-1934846, CCF-1908849, and CCF-1637536. 
{

}
\section{Omitted Proofs}

\paragraph{Additional calculations for Lemma \ref{lem:list}.}
We consider each distribution in turn:
\begin{itemize}
\item \emph{Gaussian:} Observe that $\EE[G_t(X)]$ is precisely the characteristic function. Clearly we have $\|G_t\|_{\infty} = 1$ and further
\[\EE[G_t(X)]=\exp(it \mu -\sigma^2 t^2/2)=\exp(-\sigma^2 t^2/2) z^\mu.\]

\item \emph{Poisson:} If $G_t(x) = (1+it)^x$ then since $|1+it|^2 = 1 + t^2$ the second claim follows. For the first:
\begin{align*}
\EE[G_t(X)]=\exp(\lambda((1+it)-1))=z^\lambda.\tag*\qedhere
\end{align*}

\item \emph{Chi-Squared:} Let $w_t=\exp(1/2-e^{-2it}/2)$ then $|w_t|^{2}=|e^{1-e^{-2it}}|=|e^{1-\cos 2t}e^{i\sin 2t}|  \le e^{ct^{2}+O(t^{4})}$ and
 \[\EE[G_t(X)]=(1-2\ln w_t)^{-\frac{\ell}{2}}=z^\ell.\]
\item \emph{Negative Binomial:} Let $w_t=1/p-(1/p-1)e^{-it}$ then $ |w_t|^{2}=\frac{1+(1-p)^{2}-2(1-p)\cos t}{p^{2}}  = 
\frac{p^2+4(1-p)\sin^2(t/2)}{p^2}\le e^{\frac{(1-p)t^2}{p^2}}$ and
\[\EE[G_t(X)]= \Big(\frac{1-p}{1-pw_t }\Big)^{r}=z^r.\]
\end{itemize}

\subsection{Additional calculations for Theorem~\ref{thm:tv}.}

\begin{itemize} 
\item \emph{Gaussian:}
The characteristic function of a Gaussian $X \sim \mathcal{N}(\mu,\sigma^2)$ is 
\begin{align*}
C_\Nc(t)=\EE e^{it X}=e^{it\mu-\frac{t^{2}\sigma^{2}}{2}}.
\end{align*}   
Therefore we have that 
\begin{align*}
C_\Mc(t)-C_{\Mc'}(t) \ge \frac{e^{-\frac{t^{2}\sigma^{2}}{2}}}{k}    \sum_{j =1}^k (e^{it \mu_j}-  e^{it \mu_j'}) .
\end{align*}
Now, using Lemma~\ref{lem:npb}, there exist an absolute constant $c$ such that,  
\begin{align*}
\max_{-\frac{\pi}{\epsilon L}\le t \le \frac{\pi}{\epsilon L}} \big| \sum_{j =1}^k (e^{it \mu_j}-  e^{it \mu_j'})\big| \ge e^{-cL}.
\end{align*}
Also, for $t \in (-\frac{\pi}{\epsilon L}, \frac{\pi}{\epsilon L}),$ $e^{-\frac{t^{2} \sigma^{2}}{2}} \ge e^{-\frac{\sigma^2\pi^2}{2\epsilon^2 L^2}}.$ And therefore,
\begin{align*}
\Big|C_\Mc(t)-C_{\Mc'}(t)\Big| \ge \frac{1}{k}  e^{-\frac{\sigma^2\pi^2}{2\epsilon^2 L^2}-cL}.
\end{align*}
By substituting $L = \frac{(\pi\sigma)^{2/3}}{(\epsilon^2c)^{1/3}}$ above we conclude that there exists $t$ such that 
\begin{align*}
\Big|C_\Mc(t)-C_{\Mc'}(t)\Big| \ge \frac{1}{k}  e^{-\frac32(c\pi\sigma/\epsilon)^{2/3}}.
\end{align*}
Now using 
 Lemma~\ref{lem:chartv}, we have
$\variation{\mathcal{M}' -\mathcal{M}} \geq k^{-1} \exp(-\Omega((\sigma/\epsilon)^{2/3}))$.

\item \emph{Chi-Squared:}
Let $X \sim \Mc$ and $X' \sim \Mc'$. Then, for $w=\exp(1/2-e^{-2it}/2)$, from Lemma~\ref{lem:list},
$$
\EE(w^X) - \EE(w^{X'}) = \frac1k\sum_{j=1}^k (e^{it \ell_j} - e^{it \ell'_j}) .
$$
Now we use  
 Lemma~\ref{lem:chartv}, with $\Omega' = [0, 2N]$ we have,
  \begin{align*}
\|\Mc -\Mc'\|_{TV} \ge   e^{-2ct^2 N} \Big(\left| \EE(w^X) - \EE(w^{X'}) \right|- \int_{x >2N}\exp(ct^2x) f(x) dx\Big),
  \end{align*}
  where $f\sim \chi^2(N)$.
 We have,
\begin{align*}
 \int_{x >2N}\exp(ct^2x) f(x) dx& = \frac{1}{(1-2ct^2)^{N/2-1}}\int_{y >2N(1-2ct^2)}f(y)dy \le \frac{e^{-N(1-4ct^2)^2/8}}{(1-2ct^2)^{N/2-1}}\\
 & \le \exp(-\Omega(N)),
\end{align*}
where we have used the pdf of chi-squared distribution and the tail bounds for chi-squared.
Now using Lemma~\ref{lem:npb}, and taking $|t| \le \frac{\pi}{L}$,
$$
 \|\Mc -\Mc'\|_{TV} \ge  k^{-1} e^{-c'L-2ct^2N} - \exp(-\Omega(n)) \ge  k^{-1}\exp( -c'L-2\pi^2N/L^2)- \exp(-\Omega(N)).
$$
Again setting, $L = N^{1/3}$,
$$
 \|\Mc -\Mc'\|_{TV} \ge  k^{-1} \exp(- \Omega(N^{1/3})).
$$
 
\item \emph{Negative-Binomial:} 
Let $X \sim \Mc$ and $X' \sim \Mc'$. Then, for $w=1/p - (1/p - 1)e^{-it}$, from Lemma~\ref{lem:list}, taking $G(x) = w^x$,
$$
\EE(w^X) - \EE(w^{X'}) = \frac1k\sum_{j=1}^k (e^{it r_j} - e^{it r'_j}) .
$$
Now we use  
 Lemma~\ref{lem:chartv}, with $\Omega' = [0, 6pN/(1-p)]$ we have,
  \begin{align*}
\|\Mc -\Mc'\|_{TV} \ge   e^{-12ct^2N/p} \Big(\left| \EE(w^X) - \EE(w^{X'}) \right|- \sum_{x >\frac{6Np}{1-p}}|w|^xu(x)\Big),
  \end{align*}
  where $u(x) =  \binom{x+N-1}{x}(1-p)^Np^x$. We have  $|w| \le e^{c(1-p)t^2/p^2} \le e^{c(1-p)/p^2}$ for $t<1$.
 Using Lemma~\ref{lem:tail}, with $X\sim NB(N,p)$, we have,
\begin{align*}
\sum_{x >\frac{6Np}{1-p}}\exp(cx(1-p)/p^2)u(x)& \le a^{1-\frac{6Np}{1-p}}\EE[a^{2X}] = a^{1-\frac{6Np}{1-p}}\Big(\frac{1-p}{1-pa^2}\Big)^N = \exp(-\Omega(N)),
\end{align*}
where, $ a = \exp(c(1-p)/p^2)>1$.
Now using Lemma~\ref{lem:npb}, and taking $|t| \le \frac{\pi}{L}$,
\begin{align*}
 \|\Mc -\Mc'\|_{TV}& \ge  k^{-1} e^{-c'L-12ct^2N/p} - \exp(-\Omega(n)) \\
 &\ge  k^{-1}\exp( -c'L-12\pi^2N/(pL^2))- \exp(-\Omega(N)).
\end{align*}
 Setting $L = (N/p)^{1/3}$,
$$
 \|\Mc -\Mc'\|_{TV} \ge  k^{-1} \exp(- \Omega((N/p)^{1/3})).
$$
\end{itemize}

\subsection{Proof of Theorem \ref{thm:multiplicity}}

%
Let $\bf{a}$ be the characteristic vector of a subset $S \subset \mathcal U$. Let $s_\ell=m_\ell(S)$ on this  set and let ${\bf s}=(s_0, s_1,\ldots, s_{k-1})$. We need to prove $\bf a$ is uniquely determined by $\bf s$.

Let us define 
\begin{align*}
n_{i,p}(\mathbf{a}) :=\sum_{r \equiv_p i} a_r \pmod{p} \ .
\end{align*} 

\begin{claim} For a prime number $p$ and  $i \not\equiv_p 0$, we have
$
n_{i,p}(\mathbf{a}) \equiv_p s_{0}-\sum_j {p-1 \choose j} s_{j} (-i)^{p-1-j} 
$\end{claim}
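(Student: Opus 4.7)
The plan is to construct a polynomial indicator for the residue class $i \pmod{p}$ using Fermat's little theorem, substitute it into the definition of $n_{i,p}(\mathbf{a})$, and then expand with the binomial theorem.

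First I would recall Fermat's little theorem: for any integer $y$, $y^{p-1} \equiv 1 \pmod{p}$ if $p \nmid y$, and $y^{p-1} \equiv 0 \pmod{p}$ if $p \mid y$. Applied with $y = r - i$, this immediately yields the indicator identity
\[
\mathbf{1}[\, r \equiv i \pmod{p}\,] \;\equiv\; 1 - (r-i)^{p-1} \pmod{p},
\]
valid for every integer $r$ (the hypothesis $i \not\equiv_p 0$ is not strictly needed here, but it is convenient for the intended application).

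Next, I would substitute this into the definition
\[
n_{i,p}(\mathbf{a}) \;=\; \sum_{r} a_r \,\mathbf{1}[\, r \equiv i \pmod{p}\,] \;\equiv\; \sum_r a_r \;-\; \sum_r a_r (r-i)^{p-1} \pmod{p}.
\]
The first sum is exactly $s_0$. For the second, I would expand $(r-i)^{p-1}$ via the binomial theorem as $\sum_{j=0}^{p-1}\binom{p-1}{j}r^j(-i)^{p-1-j}$, and then interchange the order of summation to pull the coefficients $\binom{p-1}{j}(-i)^{p-1-j}$ outside. What remains inside is $\sum_r a_r r^j = s_j$, giving
\[
n_{i,p}(\mathbf{a}) \;\equiv\; s_0 \;-\; \sum_j \binom{p-1}{j}\, s_j\, (-i)^{p-1-j} \pmod{p},
\]
which is precisely the claimed identity.

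There is no real obstacle to this argument; the only potentially delicate point is verifying that the indicator identity holds also in the degenerate case $r = i$ (so that $0^{p-1} = 0$ for $p \geq 2$), and that no division is required at any step, so everything stays valid in $\mathbb{Z}/p\mathbb{Z}$. The interchange of sums is a finite rearrangement and presents no issue.
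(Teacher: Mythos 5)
Your proof is correct and is essentially the paper's own argument run in the opposite direction: the paper starts from $s_0 - \sum_j \binom{p-1}{j} s_j (-i)^{p-1-j}$, expands each $s_j$ and recombines via the binomial theorem into $\sum_r a_r - \sum_r a_r (r-i)^{p-1}$, and then applies Fermat's little theorem, whereas you start from the indicator $1-(r-i)^{p-1}$ and expand outward. The key ingredients (Fermat, binomial expansion, interchange of finite sums) are identical, so this is the same proof.
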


\begin{proof}
\begin{align*}
n_{i,p}(\mathbf{a})=\sum_{r\equiv_p i} a_{r} \pmod{p}
\end{align*}
Recall that Fermat's theorem (\cite{hardy1979introduction}) says that for any prime $p$ and any number $\alpha \not \equiv_p 0$, we must have that $\alpha^{p-1} \equiv_p 1$. Hence, for a prime number $p$ and some number $i \not \equiv_p 0$, we have 
\begin{align*}
s_{0}(\mathbf{a)}-\sum_{j} {p-1 \choose j} s_j (-i)^{p-1-j} 
&\equiv_p \sum_{r} a_r -\sum_{j} {p-1 \choose j} \sum_{r} a_r r^{j} (-i)^{p-1-j} \\
&\equiv_p  \sum_{r} a_r -\sum_{r} a_r  \sum_{j} {p-1 \choose j} r^{j} (-i)^{p-1-j} \\
&\equiv_p  \sum_{r} a_r -\sum_r a_r (r-i)^{p-1} \\
&\equiv_p \sum_{r \equiv_p i} a_r \equiv_p  n_{i,p} (\mathbf{a}) \ .
\end{align*}

\end{proof}

Since the value of $n_{i,p}$ is at most $\lceil {qn}/{p} \rceil$,  we can obtain the value of $n_{i,p}$ exactly if $p$ is chosen to be greater than $\sqrt{qn}$. Now, let us denote the vector $\mathbf{v}_{i,p} \in \mathbb{F}_{q}^{n}$ where the $\ell $th entry is
\begin{align*}
v_{i,p}[\ell]=
\begin{cases}
1 & \text{ if } \ell \equiv_p i \\
0 & \text{  otherwise } 
\end{cases} \ .
\end{align*} 
Therefore, consider two different subsets $S,S' \subset \mathcal{U}$ and assume that their characteristic vectors are $\bf{a}$ and $\bf{b}$ respectively. Therefore, if $\bf{a}$ and $\bf{b}$ both give rise to the same value of $\bf{s}$, then $\mathbf{a}.\mathbf{v}_{i,p}=\mathbf{b}.\mathbf{v}_{i,p}$. Hence, if the set of vectors 
\begin{align*}
\mathcal{S}=\{v_{i,p} \mid \sqrt{qn}\le p \le k,  0 \le i \le p-1, p \mbox{ prime} \}
\end{align*}
spans $\mathbb{F}_{q}^{n}$, then it must imply that $\mathbf{a}=\mathbf{b}$ and our proof will be complete. Consider a subset $\mathcal{T} \subset \mathcal{S}$ defined by
\begin{align*}
\mathcal{T}=\{v_{i,p} \mid \sqrt{qn}\le p \le k, 1 \le i \le p-1, p \mbox{ prime} \}
\end{align*}
Now, there are two possible cases. First, let us assume that the vectors in $\mathcal{T}$ are not all linearly independent in $\mathbb{F}_q$. In that case, we must have a set of tuples $(i_1,p_1),(i_2,p_2),\dots,(i_m,p_m)$ such that  
\begin{align}
\sum_{j=1}^{m} \alpha_j \mathbf{v}_{(i_j,p_j)} \equiv_q 0 \label{eq:assumption}
\end{align}  
where $0\neq \alpha_j \in \mathbb{F}_q $ for all $j$.
Now, by the Chinese Remainder Theorem, we can find an integer $r$ such that $r \equiv_{p_1} i_1$ and $r \equiv_{p_j} 0$ for all $p_j \neq p_1$. Define an infinite dimensional vector $\tilde{\mathbf{v}}$ where the $\ell$th entry is 
\begin{align*}
\tilde{\mathbf{v}}[\ell] =\sum_{j=1}^{m} \alpha_j \mathds{1}\Big[\ell \equiv_{p_j} i_j \Big]
\end{align*} 
Since, $i_j \not \equiv_{p_j} 0$, it is evident that $\tilde{\mathbf{v}}[r] \not \equiv_q 0$
Now, let $s$ be the smallest number such that $\tilde{\mathbf{v}}[s] \neq 0$ and $s>n$ because of our assumption in Eq.~\ref{eq:assumption}. Now consider the vector $\mathbf{v}_{t}$ where 
\begin{align*}
\mathbf{v}_t=\sum_{j=1}^{m}\alpha_j \mathbf{v}_{i_j-s+t,p_j}
\end{align*}
Now, $\mathbf{v}_{t}^{i}=0$ for all $i<t $ and $\mathbf{v}_{t}^{t} \neq 0$. Hence, the set $\{ \mathbf{v}_t\}_{t=1}^{n}$ are in the span of $\mathcal{S}$ and also span $\mathbb{F}_{q}^{n}$. 

For the second case, let us assume that the vectors in $\mathcal{T}$ are linearly independent. We require the size of $\mathcal{T}>n$ so that the vectors in $\mathcal{T}$ span $\mathbb{F}_{q}^{n}$. From the prime number theorem we know that 
\begin{align*}
\sum_{p~\mathrm{prime}:p <x}p  \sim \frac{x^{2}}{2 \log x}
\end{align*}
and hence we simply need that
\begin{align*}
\frac{k^{2}}{2\log k}-\frac{qn}{\log n} >n \ . 
\end{align*}
Therefore, $k>(1+o(1))\sqrt{qn \log qn}$ is sufficient.


\subsection{Algebraic method for Geometric distribution}
We will denote the Geometric distribution with success parameter $0<p<1$ as $\textup{Geo}(p)$ and it has the following form: for a random variable $X$ distributed according to $\textup{Geo}(p)$, $\Pr(X=x)=(1-p)^{x}p$ where $x \in \{0,1,2,\dots\}$.

\begin{theorem}[Learning mixtures of Geometric Distribution]
Let $\mathcal{M}=\frac{1}{k}\sum_{i=1}^k\textrm{Geo}(p_i)$
be a uniform mixture of $k$ Geometric distributions, with unknown probabilities 
\[p_1, \ldots, p_k \in \{\frac{1}{1+n\epsilon},\frac{1}{1+(n-1)\epsilon}, \ldots, 1\} \ .\] Then, the first $4\sqrt{n}$ moments suffice to learn the parameters $p_i$ and  there exists an algorithm that, when given
$O\Big(k^2 \Big(\frac{\sqrt{n}}{\epsilon}\Big)^{8\sqrt{n}}\Big)$ samples from
$\mathcal{M}$, exactly identifies the parameters $\{p_i\}_{i=1}^k$
with high probability.
\end{theorem}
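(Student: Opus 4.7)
The plan is to port the binomial method-of-moments argument to the geometric case through a change of variables. Define $\alpha_i := (1 - p_i)/p_i$; the assumed discretization gives $\alpha_i \in \{0, \epsilon, 2\epsilon, \ldots, n\epsilon\}$, and writing $\alpha_i = j_i \epsilon$ yields a set $\mathcal{A} := \{j_1, \ldots, j_k\} \subseteq \{0, 1, \ldots, n\}$ of distinct integers, playing exactly the role of the set of discretized success parameters in the binomial proof.

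The key algebraic ingredient is that for $X \sim \textup{Geo}(p)$ the $\ell$-th factorial moment equals $\avg X(X-1)\cdots(X-\ell+1) = \ell!\,\alpha^\ell$, so expanding monomials into falling factorials via Stirling numbers $S(\ell,j)$ of the second kind gives $\avg X^\ell = \sum_{j=0}^\ell S(\ell,j)\,j!\,\alpha^j$, a polynomial of degree exactly $\ell$ in $\alpha$ with integer coefficients (the geometric analogue of Lemma~\ref{lem:ind}). For the mixture this yields $k\,\avg X^\ell = \sum_{j=0}^\ell S(\ell,j)\,j!\,\epsilon^j\,m_j(\mathcal{A})$, and a triangular induction on $\ell$ shows that exact knowledge of $\avg X^0, \avg X^1, \ldots, \avg X^T$ is equivalent to exact knowledge of $m_0(\mathcal{A}), m_1(\mathcal{A}), \ldots, m_T(\mathcal{A})$. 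Theorem~\ref{kras} applied to the universe $\{0, 1, \ldots, n\}$ then implies that $T = O(\sqrt{n})$ power sums suffice to determine $\mathcal{A}$ uniquely, recovering the $p_i$.

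For the sample complexity, I plan to estimate each $\avg X^\ell$ with additive error at most $\gamma_\ell := \epsilon^\ell/(2k)$. Once the lower power sums $m_0, \ldots, m_{\ell-1}$ have been pinned down, the expansion above shows that the next smallest achievable value of $|k(\avg_\mathcal{A} X^\ell - \avg_\mathcal{B} X^\ell)|$ is $\ell!\,\epsilon^\ell \geq \epsilon^\ell$ (since the leading coefficient is $\ell!$ and $|m_\ell(\mathcal{A}) - m_\ell(\mathcal{B})| \geq 1$ whenever nonzero), so this precision is sufficient for exact recovery by rounding. To feed into Lemma~\ref{lem:boundmoments} I bound the mixture's $2\ell$-th moment by $\avg X^{2\ell} \leq (2\ell)!/p_{\min}^{2\ell} = (2\ell)!\,(1+n\epsilon)^{2\ell}$, yielding a per-moment cost of $O\bigl(k^2\,(2\ell)!\,((1+n\epsilon)/\epsilon)^{2\ell}\bigr)$ samples. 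Summing over $\ell \leq T = O(\sqrt{n})$, applying Stirling's estimate to $(2T)!$, and union-bounding the $T$ estimators as in the binomial proof gives the claimed $O(k^2(\sqrt{n}/\epsilon)^{8\sqrt{n}})$ bound.

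I expect the main obstacle to be the precision accounting in the triangular back-substitution: one must propagate the additive estimation error through the inversion that extracts $m_\ell(\mathcal{A})$ from $\avg X^\ell$ and the already-recovered $m_0, \ldots, m_{\ell-1}$, and verify that $\gamma_\ell < \epsilon^\ell/(2k)$ remains sufficient once the accumulated contribution of the lower-order Stirling terms is taken into account. A secondary task is the numerical bookkeeping needed to collapse the competing factors $(2\ell)!$, $(1+n\epsilon)^{2\ell}$, and $\epsilon^{-2\ell}$ at $\ell = O(\sqrt{n})$ into the single clean base $\sqrt{n}/\epsilon$ that appears in the theorem statement.
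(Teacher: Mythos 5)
Your proposal is correct and follows essentially the same route as the paper: estimate each moment to additive precision $\gamma_\ell<\epsilon^\ell/(2k)$ via Chebyshev (Lemma~\ref{lem:boundmoments}), use integrality of the discretized parameters to round to the exact moment, convert exact moments into the power sums $m_\ell$ of the integer multiset $\mathcal{A}$ by the triangular argument, and invoke Theorem~\ref{kras} with $T=4\sqrt{n}$. The only difference is cosmetic: you derive the degree-$\ell$ integer-coefficient polynomial structure of $\avg X^\ell$ via factorial moments and Stirling numbers in $\alpha=(1-p)/p$, whereas the paper uses the polylogarithm/Eulerian-number expansion in $1/p$ (and your retention of the $(1+n\epsilon)^{2\ell}$ factor in the variance bound is, if anything, more careful than the paper's own bookkeeping).
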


\paragraph{Computing the moments.}
We compute the $\ell$th moment in the natural way again. Let $Y_1, \ldots, Y_t\sim X$ and let \[S_\ell=\sum Y_i^\ell /t \ . \]

\begin{lemma}[Restating Lemma \ref{lem:boundmoments}]
$\Pr[|S_\ell-\avg X^\ell|\geq \gamma] \leq \frac{\avg X^{2\ell}}{t\gamma^2} \leq \frac{(2\ell)! } {\gamma^2 t} \inf_\alpha \left ( \frac{\avg e^{\alpha X}}{\alpha^{2\ell}} \right )$ where the last inequality assumes the all the moments of $X$ are non-negative. 
\end{lemma}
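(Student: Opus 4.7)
The plan is to prove the restated lemma by a direct Chebyshev argument on the empirical average, followed by a moment generating function bound. Since $S_\ell = \frac{1}{t}\sum_{i=1}^t Y_i^\ell$ with the $Y_i$ i.i.d.\ copies of $X$, I have $\mathbb{E}[S_\ell] = \mathbb{E}[X^\ell]$ and $\mathrm{Var}(S_\ell) = \mathrm{Var}(X^\ell)/t \le \mathbb{E}[X^{2\ell}]/t$, where the last step uses $\mathrm{Var}(Z) \le \mathbb{E}[Z^2]$ applied to $Z = X^\ell$. Chebyshev's inequality then immediately yields the first bound
\[
\Pr\bigl[\,|S_\ell - \mathbb{E}[X^\ell]| \ge \gamma\,\bigr] \;\le\; \frac{\mathrm{Var}(S_\ell)}{\gamma^2} \;\le\; \frac{\mathbb{E}[X^{2\ell}]}{t\gamma^2}.
\]

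For the second inequality, the idea is to control $\mathbb{E}[X^{2\ell}]$ by a single term from the Taylor expansion of the moment generating function. Under the assumption that every moment of $X$ is non-negative, for any $\alpha > 0$ I expand
\[
\mathbb{E}[e^{\alpha X}] \;=\; \sum_{j=0}^{\infty} \frac{\alpha^j \,\mathbb{E}[X^j]}{j!} \;\ge\; \frac{\alpha^{2\ell}\,\mathbb{E}[X^{2\ell}]}{(2\ell)!},
\]
since all terms in the sum are non-negative and I am simply dropping all but the $j=2\ell$ term. Rearranging gives $\mathbb{E}[X^{2\ell}] \le (2\ell)!\,\mathbb{E}[e^{\alpha X}]/\alpha^{2\ell}$. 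Since $\alpha$ was arbitrary, I take the infimum over $\alpha > 0$ and substitute back into the Chebyshev bound, producing exactly
\[
\Pr\bigl[\,|S_\ell - \mathbb{E}[X^\ell]| \ge \gamma\,\bigr] \;\le\; \frac{(2\ell)!}{\gamma^2 t}\inf_{\alpha}\!\left(\frac{\mathbb{E}[e^{\alpha X}]}{\alpha^{2\ell}}\right).
\]

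There is essentially no obstacle here; the lemma is a routine concentration statement assembled from two standard ingredients (Chebyshev on an empirical average, and dropping terms from the MGF series). The only mildly delicate point is to be explicit about the non-negativity hypothesis, which is what justifies the one-term lower bound on $\mathbb{E}[e^{\alpha X}]$; this is satisfied for geometric, binomial, Poisson, etc., since those random variables are non-negative integer valued. The bound will be applied later with $X$ equal to a mixture of geometric random variables, where the infimum over $\alpha$ can be evaluated using the closed-form MGF of the geometric distribution to yield the desired sample complexity.
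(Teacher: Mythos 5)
Your proposal is correct and follows essentially the same route as the paper: Chebyshev's inequality on the empirical average combined with $\Var(X^\ell)\le \EE[X^{2\ell}]$, and then the bound $\EE[X^{2\ell}]\le (2\ell)!\,\EE[e^{\alpha X}]/\alpha^{2\ell}$ obtained from the moment generating function. The only difference is that you spell out the justification for the MGF step (dropping all but the $j=2\ell$ term of the series, which is where the non-negativity of the moments is used), a detail the paper states without proof.
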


The following corollary, tailors the above lemma for a mixture of geometric distributions. 
\begin{corollary}\label{bincorol}
If $X\sim \sum_{i=1}^k \textup{Geo}(p_i)/k$ then $\Pr[|S_\ell-\avg X^\ell |\geq \gamma] \le \frac{2}{t\gamma^2}\Big(\frac{4\ell}{\min_i p_i}\Big)^{2\ell+1}$. 
\end{corollary}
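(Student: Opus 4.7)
The plan is to apply the second (MGF-based) inequality in Lemma~\ref{lem:boundmoments},
\[
\Pr[|S_{\ell,t} - \avg X^\ell| \ge \gamma] \;\le\; \frac{(2\ell)!}{\gamma^2 t}\,\inf_{\alpha>0}\frac{\avg e^{\alpha X}}{\alpha^{2\ell}},
\]
and to pick a single $\alpha$ that keeps the mixture MGF bounded by a small constant while making $\alpha^{2\ell}$ in the denominator as large as possible. Since $\avg e^{\alpha X} = \frac{1}{k}\sum_i \avg e^{\alpha Y_i}$ with $Y_i \sim \textup{Geo}(p_i)$, and each component has the closed-form MGF $\avg e^{\alpha Y_i} = p_i/(1-(1-p_i)e^\alpha)$ valid for $\alpha < -\ln(1-p_i)$, the question reduces to exhibiting one $\alpha$ that works uniformly over all components.

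Write $p = \min_i p_i$ and take $\alpha = p/2$. Two short facts drive everything. First, comparing Taylor series termwise gives
\[
e^{p/2} = \sum_{k\ge 0}\frac{(p/2)^k}{k!} \;\le\; \sum_{k\ge 0}(p/2)^k \;=\; \frac{1}{1-p/2}.
\]
Second, a one-line algebraic check using $p \le p_i$ shows $(1-p_i)/(1-p/2) \le 1 - p_i/2$ (expand and cancel). Chaining these gives $(1-p_i)e^{p/2} \le 1 - p_i/2$, so that for every $i$,
\[
\avg e^{(p/2) Y_i} = \frac{p_i}{1-(1-p_i)e^{p/2}} \;\le\; \frac{p_i}{p_i/2} = 2,
\]
and averaging yields $\avg e^{(p/2)X} \le 2$.

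Plugging $\alpha = p/2$ back in and using the crude estimate $(2\ell)! \le (2\ell)^{2\ell}$, the tail bound becomes
\[
\Pr[|S_{\ell,t} - \avg X^\ell| \ge \gamma] \;\le\; \frac{2\,(2\ell)!\,(2/p)^{2\ell}}{\gamma^2 t} \;\le\; \frac{2}{\gamma^2 t}\bigl(4\ell/p\bigr)^{2\ell}.
\]
Since $4\ell/p \ge 1$ (because $\ell \ge 1$ and $p \le 1$), one extra factor of $4\ell/p$ costs nothing and gives exactly the stated bound $\frac{2}{\gamma^2 t}(4\ell/p)^{2\ell+1}$.

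The only delicate step is the choice of $\alpha$: any larger value risks $(1-p_i)e^\alpha \ge 1$ for the smallest $p_i$ and makes the corresponding component MGF explode, while any smaller value weakens the bound through the $\alpha^{-2\ell}$ factor. The balance at $\alpha = \Theta(p)$ is essentially forced, and the particular value $p/2$ is convenient precisely because it makes the per-component MGF a clean constant (namely $2$); this is what produces the $(4\ell/p)^{2\ell+1}$ shape of the final bound, exactly parallel to how Corollary~\ref{bincorol} arises from the deterministic upper bound $X \le n$ in the binomial case.
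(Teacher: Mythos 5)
Your proof is correct, and it reaches the stated bound by a genuinely different route from the paper. The paper bounds $\avg X^{2\ell}$ directly through the closed-form expression $\avg Z^{k}=p\,\textup{Li}_{-k}(1-p)$ for a single geometric component, estimating the Eulerian numbers by $\genfrac<>{0pt}{}{k}{j}<(2k)^{k+1}$ to get $\avg Z^{k}\le 2(2k/p)^{k+1}$, and then feeds this into the first (Chebyshev/second-moment) inequality of Lemma~\ref{lem:boundmoments}. You instead invoke the MGF form of the same lemma with the specific choice $\alpha=p/2$, $p=\min_i p_i$; your two elementary estimates $e^{p/2}\le(1-p/2)^{-1}$ and $(1-p_i)/(1-p/2)\le 1-p_i/2$ (the latter valid since $p\le p_i$) correctly yield $\avg e^{(p/2)Y_i}\le 2$ for every component and hence $\avg e^{(p/2)X}\le 2$, after which $(2\ell)!\le(2\ell)^{2\ell}$ and the harmless extra factor $4\ell/p\ge 1$ give exactly $\frac{2}{t\gamma^2}(4\ell/p)^{2\ell+1}$. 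Your route avoids the polylogarithm/Eulerian-number machinery entirely and is self-contained given the lemma; the paper's route has the side benefit of exhibiting $\avg Z^k$ explicitly as a degree-$k$ polynomial in $1/p$ with integer coefficients, a structural fact the surrounding argument reuses for the discretization/integrality step, but that fact is not needed for the tail bound itself. Both arguments are valid and give the same constant.
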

\begin{proof}
Given a random variable $Z\sim \textup{Geo}(p)$, we will show that $\EE Z^{k} \le 2\Big(\frac{2k}{p}\Big)^{k+1}$ for all integer valued $k \ge 0$. It is known that~\citep{Geom}
\begin{align*}
\EE Z^{k}=p\textup{Li}_{-k}(1-p)
\end{align*}
where $\textup{Li}_{-k}(z)$ is the polylogarithmic function of order $-k$ and argument $z$, defined explicitly as
\begin{align*}
\textup{Li}_{-k}(1-p)=\frac{1}{p^{k+1}}\sum_{j=0}^{k-1} \genfrac<>{0pt}{}{k}{j} (1-p)^{k-j}
\end{align*}
with $\genfrac<>{0pt}{}{k}{j}$ being the Eulerian numbers (see below). Hence, it can be observed that $\EE Z^k$ is a polynomial in $\frac{1}{p}$ of degree $k$. Denoting $C_{k}=\max_{0 \le j\le k-1}\genfrac<>{0pt}{}{k}{j}$ and substituting it, we get that 
\begin{align*}
\EE Z^k \le \frac{C_{k}}{p^k}\sum_{j=0}^{k-1} (1-p)^{k-j}=C_{k}\Big(\frac{1}{p}-1\Big)\Big(\frac{1}{p^k}-\Big(\frac{1}{p}-1\Big)^{k}\Big) < \frac{2C_k}{p^{k+1}}.
\end{align*}
From the definition of Eulerian numbers, we can also see that
\begin{align*}
\genfrac<>{0pt}{}{k}{j}=\sum_{t=0}^{j}(-1)^{t}{k+1 \choose t} (j+1-t)^{k} \le (j+1)^{k}2^{k+1} < (2k)^{k+1}.
\end{align*}
Putting everything together and by appealing to Lemma \ref{lem:boundmoments}, we get the statement of the corollary.
\end{proof}

For the geometric distribution, 
\[\avg X^\ell=\sum_{i=1}^k f(1/p_i)/k\] where $f$ is a degree $\ell$ polynomial with integer coefficients. If $1/p_i-1$ is an integer multiple of $\epsilon$ then this implies $k(\avg X^\ell)/\epsilon^\ell$ is integral and therefore any mixture with a different $\ell$th moment must differ by at least $\epsilon^\ell/k$. Hence, learning the $\ell$th moment up to $\gamma_\ell<\epsilon^\ell/(2k)$ implies learning the moment exactly. 

\begin{lemma}
$O\Big(k^2 \Big(\frac{\sqrt{n}}{\epsilon}\Big)^{8\sqrt{n}}\Big)$ samples are sufficient to exactly learn the first $4\sqrt{n}$ moments of a uniform mixture of $k$ Geometric distributions $\sum_{i=1}^k \textup{Geo}(p_i)/k$ with probability at least $7/8$ where each $\frac{1}{p_i} \in \{1,1+\epsilon, 1+2\epsilon, \ldots, 1+n\epsilon\}$.
\end{lemma}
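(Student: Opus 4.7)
The plan is to mirror the template used in the proof for binomial mixtures. Exactly as in the paragraph preceding the lemma, $\avg X^\ell = \frac{1}{k}\sum_{i=1}^k f(1/p_i)$ for a degree-$\ell$ polynomial $f$ with integer coefficients, and since $1/p_i - 1 \in \epsilon\ZZ$, estimating $\avg X^\ell$ to additive precision $\gamma_\ell := \epsilon^\ell/(2k)$ and then rounding recovers the exact moment. With $T := 4\sqrt{n}$, the task therefore reduces to estimating each of the first $T$ empirical moments $S_{\ell,t}$ to precision $\gamma_\ell$ simultaneously with probability at least $7/8$.

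For each $\ell \in \{1,\dots,T\}$, I will invoke the geometric form of Corollary~\ref{bincorol} with $\gamma = \gamma_\ell$ and failure probability $\delta_\ell := 1/9^{1+T-\ell}$. Using $1/\min_i p_i \le 1+n\epsilon$, the corollary gives a per-moment requirement of
\[
t_\ell \;=\; O\!\left(\frac{k^2\,9^{T-\ell}}{\epsilon^{2\ell}}\,\bigl(4\ell(1+n\epsilon)\bigr)^{2\ell+1}\right).
\]
A union bound then gives total failure probability at most $\sum_{\ell=1}^T 1/9^{1+T-\ell} < \sum_{i\ge 1} 1/9^i = 1/8$. Summing the $t_\ell$, both the factor $9^{T-\ell}$ and the dominant $\epsilon^{-2\ell}$ term are maximized at $\ell = T$, so $\sum_{\ell=1}^T t_\ell = O(t_T)$. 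Substituting $T = 4\sqrt{n}$ and absorbing the subpolynomial $\ell^{O(\ell)}$ and $(1+n\epsilon)^{O(\ell)}$ factors into the implicit constant yields the claimed bound $O\bigl(k^2 (\sqrt{n}/\epsilon)^{8\sqrt{n}}\bigr)$.

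The main obstacle, compared to the binomial case in which the absolute moment bound was simply $n^{2\ell}$, is tracking the $(1/\min_i p_i)^{2\ell+1}$ factor arising from the heavier tail of the geometric distribution. At $\ell = T = 4\sqrt{n}$ this factor contributes $(4\sqrt{n}(1+n\epsilon))^{8\sqrt{n}+1}$, and verifying that, combined with the $\epsilon^{-8\sqrt{n}}$ from $\gamma_T^{-2}$, it stays inside $O\bigl((\sqrt{n}/\epsilon)^{8\sqrt{n}}\bigr)$ is precisely the point where the specific choice $T = 4\sqrt{n}$ (rather than anything larger) becomes essential; any larger $T$ would cause the moment bound for $\textup{Geo}(p)$ to blow up past the stated sample complexity.
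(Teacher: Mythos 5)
Your overall strategy is exactly the paper's: estimate each of the first $T=4\sqrt{n}$ moments to precision $\gamma_\ell=\epsilon^\ell/(2k)$ via Chebyshev, assign failure probability $1/9^{1+T-\ell}$ to the $\ell$th moment, union bound, and observe that the total sample count is dominated by the top moment. The gap is in your final bookkeeping. First, a mischaracterization: the factor $\ell^{2\ell}$ at $\ell=T$ equals $(4\sqrt{n})^{8\sqrt{n}}$, which is not subpolynomial and is not absorbed into a constant --- it is precisely the source of the $(\sqrt{n})^{8\sqrt{n}}$ in the target bound. More seriously, the factor $(1+n\epsilon)^{2\ell+1}$ that you correctly carry over from the corollary for geometric mixtures equals $(1+n\epsilon)^{8\sqrt{n}+1}$ at $\ell=T$ and cannot be absorbed either: for instance with $\epsilon=1$ (the discretization $1/p_i\in\{1,\dots,N\}$ of Table 1) it contributes roughly $n^{8\sqrt{n}}$ on top of the claimed $n^{4\sqrt{n}}$, so your claimed verification that everything ``stays inside $O((\sqrt{n}/\epsilon)^{8\sqrt{n}})$'' does not go through; the issue has nothing to do with the choice $T=4\sqrt{n}$.

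It is worth noting that the paper's own proof sidesteps this by writing $t=\gamma_\ell^{-2}\,2(4\ell)^{2\ell+1}\,9^{1+T-\ell}$, i.e.\ it silently replaces the corollary's $2(4\ell/\min_i p_i)^{2\ell+1}$ by $2(4\ell)^{2\ell+1}$, which is only legitimate when $\min_i p_i$ is bounded below by a constant (equivalently $n\epsilon=O(1)$). Your more faithful application of the corollary therefore surfaces a real discrepancy, but ``absorb it into the implicit constant'' is not a valid resolution: to make the argument close you would either need to assume $n\epsilon=O(1)$, or state the sample complexity with the extra $(1+n\epsilon)^{8\sqrt{n}}$ factor.
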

\begin{proof}
Let $T=4\sqrt{n}$. From Corollary \ref{bincorol} and the preceding discussion, learning the $\ell $th moment exactly with failure probability $1/9^{1+T-\ell}$ requires 
\[
t= \gamma_\ell^{-2} 2\Big(4 \ell\Big)^{2\ell+1} 9^{1+T-\ell} = O\Big(k^2 9^{1+T-\ell} {\ell}^{2\ell} /\epsilon^{2\ell}\Big)= O\Big(k^2 9^{T} \Big(\frac{\ell}{3\epsilon}\Big)^{2\ell}\Big)
\]
samples. And hence, we can compute all $\ell$th moments exactly for $1\leq \ell\leq 4\sqrt{n}$ using \[\sum_{\ell=1}^T O\Big(k^2 9^{T} \Big(\frac{\ell}{3\epsilon}\Big)^{2\ell}\Big)=O\Big(k^2 \Big(\frac{T}{\epsilon}\Big)^{2T}\Big)\] samples with failure probability $\sum_{\ell=1}^T 1/9^{1+T-\ell}<\sum_{i=1}^\infty 1/9^{i}=1/8$.
\end{proof}

\paragraph{How many moments needed to determine  the parameters?}
It remains to show the first $4\sqrt{n}$ moments suffice to determine the $p_i$ values 
in the mixture $X\sim \sum_{i=1}^k \textup{Geo}(p_i)/k$. To do this suppose there exists another mixture $Y\sim \sum_{i=1}^k \textup{Geo}(q_i)/k$ and we will argue that 
\[
\avg X^\ell = \avg Y^\ell \mbox{ for } \ell=0, 1, \ldots, 4\sqrt{n} 
\]
implies $\{p_i\}_{i\in [k]}=\{q_i\}_{i\in [k]}$. To argue this, define integers $\alpha_i,\beta_i \in \{0,1,\ldots, n\}$ such that $p_i=\frac{1}{1+\alpha_i \epsilon}$ and $q_i=\frac{1}{1+\beta_i \epsilon}$. Let $\mathcal A=\{\alpha_1, \ldots, \alpha_k\}$ and $\mathcal B=\{\beta_1, \ldots, \beta_k\}$ . Then,
\[
\avg X = \avg Y   \Longrightarrow \sum_i 1/p_i=\sum_i 1/q_i \Longrightarrow  \sum_i \alpha_i=\sum_i \beta_i \Longrightarrow m_1(\mathcal A) = m_1(\mathcal B) \]
and, after some algebraic manipulation, it can be shown that for all $\ell\in \{2,3,\ldots\}$, 
\begin{align*}
\left (\forall \ell'\in \{0,1,\ldots, \ell-1\}~,~ \sum_i \alpha_i^{\ell'} =\sum_i \beta_i^{\ell'} \right )~\mbox{ and }~ \avg X^\ell = \avg Y^\ell
 &\Longrightarrow   \sum_i \alpha_i^\ell=\sum_i \beta_i^\ell \\
& \Longrightarrow m_\ell (\mathcal A) = m_\ell(\mathcal B)  \ .
\end{align*}
Hence, if the first $T$ moments match, $m_\ell(\mathcal A)= m_\ell(\mathcal B)$ for all $\ell=0,1,\ldots, T$. But, again Theorem \ref{kras} establishes that if $T=4\sqrt{n}$ then this implies $\mathcal A=\mathcal B$.

\paragraph{Alternative Technique.}
In the previous analysis the parameters of the geometric distribution ($p_i$'s) had to belong to the set $\{1,\frac{1}{1+\epsilon},\frac{1}{1+2\epsilon},\dots,\frac{1}{1+n\epsilon}\}$. The reason we had to choose this set is because the moments were polynomials in inverse of the parameters ($\frac{1}{p_i}$'s). However it is also possible to obtain a sample complexity bound when the parameters belong to the set $\{0,\epsilon,2\epsilon,\dots,1\}$. This can be done by estimating the probability mass function of the mixture at the discrete points $\{0,1,2,\dots\}$. We have the following theorem in this case.

\begin{theorem}[Learning mixtures of geometric distributions (alternative)]
Let $\mathcal{M}=\frac{1}{k}\sum_{i=1}^k\textrm{Geo}(p_i)$
be a uniform mixture of $k$ geometric distributions, with unknown probabilities $p_1, \ldots, p_k \in \{0,\epsilon,\dots,1\}$. Then, the first $4/\sqrt{\epsilon}$ moments suffice to learn the parameters $p_i$ and  there exists an algorithm that, when given
$O\Big(\frac{k^{2}}{\epsilon^{8/\sqrt{\epsilon}+2}}\log \frac{1}{{\epsilon}}\Big)$ samples from
$\mathcal{M}$, exactly identifies the parameters $\{p_i\}_{i=1}^k$
with high probability.
\end{theorem}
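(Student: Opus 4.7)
The plan is to mimic the algebraic argument for binomial mixtures, but to replace moments of $X$ (which for geometric variables are polynomials in $1/p$ rather than $p$, and therefore blow up when components can have $p$ close to $0$) by estimates of the mixture's probability mass function at small integer points. Write $p_i = \alpha_i\epsilon$ and $1-p_i = \beta_i\epsilon$ with $\alpha_i,\beta_i \in \{0,1,\ldots,1/\epsilon\}$ and $\alpha_i + \beta_i = 1/\epsilon$, so that identifying the (distinct) set $\mathcal{B} = \{\beta_1,\ldots,\beta_k\}$ identifies the parameters $\{p_i\}$. The key identity is
\[
\pi_x \;:=\; \Pr_{X \sim \mathcal{M}}(X \ge x) \;=\; \frac{1}{k}\sum_{i=1}^{k}(1-p_i)^x \;=\; \frac{\epsilon^x}{k}\,m_x(\mathcal{B}),
\]
so $m_x(\mathcal{B}) = \sum_i \beta_i^x$ is an integer recovered exactly as soon as $\pi_x$ is estimated to accuracy below $\epsilon^x/(2k)$.

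First I would estimate $\pi_x$ for every $x \in \{0,1,\ldots,T\}$ with $T := \lceil 4/\sqrt{\epsilon}\rceil$ by the empirical tail frequency $\hat\pi_x$ computed from a common sample of size $t$. Since $\pi_x \in [0,1]$, Hoeffding's inequality gives $\Pr(|\hat\pi_x - \pi_x| \ge \epsilon^x/(3k)) \le 2\exp(-2t\epsilon^{2x}/(9k^2))$; a union bound over the $T+1$ levels together with constant failure probability yields $t = O(k^2\epsilon^{-2T}\log(1/\epsilon))$ samples, the worst-case $x = T$ dominating. Rounding $k\hat\pi_x/\epsilon^x$ to the nearest integer then recovers $m_x(\mathcal{B})$ exactly for every $x \in \{0,\ldots,T\}$.

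Second, since the components are assumed distinct, $\mathcal{B}$ is a genuine subset of $\{0,1,\ldots,1/\epsilon\}$, so Theorem~\ref{kras} applied with $n = 1/\epsilon + 1$ shows that the first $4\sqrt{n} = O(1/\sqrt{\epsilon})$ power sums of $\mathcal{B}$ determine $\mathcal{B}$ uniquely, hence the parameter set $\{p_i\}$.

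The main obstacle is the precision/union-bound bookkeeping relating the accuracy $\epsilon^x/k$ required at level $x$ to the total sample count; the worst case $x = T$ drives the dominant $\epsilon^{-2T} = \epsilon^{-8/\sqrt{\epsilon}}$ factor and the $\log(1/\epsilon)$ overhead comes from the union bound. The extra $\epsilon^{-2}$ factor in the stated bound relative to this naive count can be absorbed by using instead the per-point PMF $P_x = \epsilon^{x+1}\bigl(m_x(\mathcal{B})/\epsilon - m_{x+1}(\mathcal{B})\bigr)/k$, which demands the finer precision $\epsilon^{x+1}/(2k)$ and then extracts moments via the recurrence $m_{x+1}(\mathcal{B}) = m_x(\mathcal{B})/\epsilon - kP_x/\epsilon^{x+1}$ seeded by $m_0(\mathcal{B}) = k$.
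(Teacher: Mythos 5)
Your proposal is correct and follows the same overall strategy as the paper: estimate the mixture's distribution at the first $O(1/\sqrt{\epsilon})$ integer points to precision roughly $\epsilon^{x}/k$, use integrality of the discretized parameters to recover exact power sums, and invoke Theorem~\ref{kras} to conclude identifiability. The one genuine difference is which functional you estimate. The paper works with the point probabilities $\Pr(X=\ell)=\frac{1}{k}\sum_i(1-p_i)^\ell p_i$, which are degree-$(\ell+1)$ integer polynomials in the $p_i$; this forces the finer precision $\epsilon^{\ell+1}/(2k)$ (whence the extra $\epsilon^{-2}$ in the stated bound) and requires an inductive argument that matching $\Pr(X=\ell')$ for all $\ell'\le\ell$ propagates to matching power sums. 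Your use of the survival function $\Pr(X\ge x)=\frac{1}{k}\sum_i(1-p_i)^x=\epsilon^{x}m_x(\mathcal{B})/k$ is cleaner: it hands you the power sums of $\mathcal{B}=\{(1-p_i)/\epsilon\}$ directly, needs only precision $\epsilon^{x}/(2k)$, and so yields the slightly better sample complexity $O(k^2\epsilon^{-8/\sqrt{\epsilon}}\log(1/\epsilon))$, which of course implies the stated bound; your closing remark correctly diagnoses where the paper's additional $\epsilon^{-2}$ factor comes from. Two small caveats, both shared with the paper's own write-up rather than introduced by you: the $p_i$ must be assumed distinct for Theorem~\ref{kras} (stated for sets, not multisets) to apply, and since $\mathcal{B}\subseteq\{0,1,\ldots,1/\epsilon\}$ one strictly needs $4\sqrt{1/\epsilon+1}$ rather than $4/\sqrt{\epsilon}$ power sums; neither affects the result.
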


Recall that for a random variable $X \sim \mathcal{M}$ distributed according to the mixture of geometric distributions, we have
\begin{align*}
&\Pr(X=0)=\frac{1}{k}\sum_i p_i \\
&\Pr(X=1)=\frac{1}{k} \sum_i p_i-p_i^2 \\
&\Pr(X=2)=\frac{1}{k} \sum_i p_i-2p_i^2+p_i^3 
\end{align*}
and more generally,
\begin{align*}
\Pr(X=k)=\frac{1}{k} \sum_i (1-p_i)^{k}p_i
\end{align*}
which is a polynomial in degree $k+1$.
Now, for the mixture $X\sim 1/k \sum_{i=1}^k \textup{Geo}(p_i)$, we need to argue that estimating the probabilities $\Pr(X=\ell)$ for $\ell=0,1,\dots,4\sqrt{1/\epsilon}$ is sufficient to recover the parameters $p_i$. Again, suppose there exists another mixture $Y\sim 1/k\sum_{i=1}^k \textup{Geo}(q_i)$ such that 
\[
\Pr(X=\ell) = \Pr(Y=\ell) \mbox{ for } \ell=0, 1, \ldots, 4\sqrt{1/\epsilon} 
\]
and we will argue that this implies $\{p_i\}_{i\in [k]}=\{q_i\}_{i\in [k]}$. As before, define integers $\alpha_i,\beta_i \in \{0,1,\dots,\frac{1}{\epsilon}\}$ such that $p_i=\alpha_i\epsilon$ and $q_i=\beta_i\epsilon$. 
Let $\mathcal A=\{\alpha_1, \ldots, \alpha_k\}$ and $\mathcal B=\{\beta_1, \ldots, \beta_k\}$ and it can be shown after some algebraic manipulations that 
\begin{align*}
 \Big(\forall \ell'\in \{0,1,\ldots, \ell-1\}~,~ \sum_i \alpha_i^{\ell'+1} &=\sum_i \beta_i^{\ell'+1} \Big) ~\mbox{ and }~ \Pr(X=\ell) = \Pr(Y=\ell) \\
&\implies   \sum_i \alpha_i^{\ell+1}=\sum_i \beta_i^{\ell+1}  \implies m_{\ell+1} (\mathcal A) = m_{\ell+1}(\mathcal B)   .
\end{align*}
Notice that $m_{0}(\mathcal{A})=m_{0}(\mathcal{B})$ trivially because both of them contain $k$ components. Again, Theorem \ref{kras} establishes that if $m_{\ell} (\mathcal A) = m_{\ell}(\mathcal B)$ for $\ell \in \{0,1,\dots 4\sqrt{1/\epsilon}\}$ then this implies $\mathcal A=\mathcal B$.

\paragraph{Computing the probabilities.} 
Suppose $Y_1,Y_2,\dots,Y_t$ are i.i.d. with  $X\sim 1/k \sum_{i=1}^k \textup{Geo}(p_i)$. Let us denote $S_{\ell}$ as the empirical probability that we calculate as,
\begin{align*}
S_{\ell}=\frac{\sum_{i=1}^t \mathds{1}[Y_i=\ell]}{t}.
\end{align*}
It is obvious that $\avg S_{\ell}=\Pr(X=\ell)$. 
Now, using Chernoff bound, we have
\begin{align*}
\Pr(|S_{\ell}-\Pr(X=\ell)| \ge \gamma_{\ell}) \le 2e^{-t\gamma_{\ell}^{2}/3}.
\end{align*}
Again, recall that 
\begin{align*}
\Pr(X=\ell)=\sum_{i} \frac{f(p_i)}{k}
\end{align*}
where $f(\cdot)$ is a polynomial of degree $\ell+1$ with integer coefficients. If $p_i$ is an integer multiple of $\epsilon$ then this implies $kS_{\ell}/\epsilon^{\ell+1}$ is integral and therefore any mixture with a different $\ell$th moment has a $\ell$ moment that differs by at least $\epsilon^{\ell+1}/k$. Hence, learning the $\ell$th moment up to $\gamma_\ell<\epsilon^{\ell+1}/(2k)$ implies learning the moment exactly. 
We will use $t=\frac{12k^{2}}{\epsilon^{8/\sqrt{\epsilon}+2}}\log \frac{64}{\sqrt{\epsilon}}$ number of samples and we will show it will be sufficient to succeed with a probability of at least $\frac{7}{8}$. We will estimate the probabilities as mentioned above and therefore the failure probability can be calculated by using the Chernoff Bound and a union bound over $\frac{4}{\sqrt{\epsilon}}$ probabilities to be estimated. Therefore the probability of failure is bounded above by,
\begin{align*}
 \sum_{\ell} 2\exp(-t \gamma_{\ell}^{2}/3) \le \frac{8}{\sqrt{\epsilon}} \max_{\ell} \exp(-t \gamma_{\ell}^{2}/3)&=\frac{8}{\sqrt{\epsilon}} \exp(-t \min_{\ell} \gamma_{\ell}^{2}/3) \\
&= \frac{8}{\sqrt{\epsilon}}\exp(-t \epsilon^{8/\sqrt{\epsilon}+2}/(12k^{2}))\le  \frac{1}{8}
\end{align*}
and hence the proof is complete.

\subsection{Proof of Lemma \ref{lem:ind}}
We will prove that for $X\sim \textup{Bin}(n,p)$, the leading term of $\avg X^{\ell}$ is  $\prod_{i=0}^{\ell-1} (n-i)p^{\ell}$. Since for $n \geq \ell$, $\prod_{i=0}^{\ell-1} (n-i) \neq 0$, this implies that $\avg X^{\ell}$ is a polynomial of degree exactly $\ell$.
We will prove this by induction. Since $X\sim \textup{Bin}(n,p)$, we know that  
$
\avg X= np.
$
This verifies the base case. Now, in the induction step, let us assume that the leading term of $\avg X^{k}$ is $\prod_{i=0}^{k-1} (n-i)p^{k}$. It is known that (see \cite{belkin2010polynomial}) 
\begin{align*}
\avg X^{k+1} =np \avg X^k+p(1-p)\frac{d \avg X^{k}}{dp}.  
\end{align*}
Therefore it follows that the leading term of $\avg X^{k+1}$ is  
\begin{align*}
np\prod_{i=0}^{k-1} (n-i)p^{k}-kp^2\prod_{i=0}^{k-1} (n-i)p^{k-1}=\prod_{i=0}^{k} (n-i)p^{k+1}.
\end{align*}
This proves the induction step and the lemma.

\remove{
\section{Extension to higher dimensions}
\subsection{Technique 1: Projection on one dimension}
In this section, we will show how to extend our results for univariate mixtures to higher dimensions. As an example, we will prove our results for mixtures of multi-variate Gaussians with a shared co-variance matrix but we want to stress over here that our techniques are general and can be used to extend all the analytic results that we have shown earlier in the paper. We will use $\mathcal{N}(\f{\mu},\f{\Sigma})$ to denote a multivariate Gaussian distribution defined on $\mathbb{R}^d$ where $\f{\mu} \in \mathbb{R}^d$ is the mean and $\f{\Sigma}\in \mathbb{R}^{d \times d}$ is the covariance matrix. We also assume that the means $\{\f{\mu}_i\}_{i=1}^{k}$ have precision $\epsilon$ i.e. every entry of every mean belongs to $\epsilon \mathbb{Z}$. We now have the following theorem:
\begin{theorem}[Learning Multi-variate Gaussian mixtures]{\label{thm:main2}}
Let $\ca{M}=\frac{1}{k}\sum_{i=1}^k\mathcal{N}(\f{\mu}_i,\f{\Sigma})$
be a uniform mixture of $k$ $d$-dimensional Gaussians with known shared
covariance matrix $\f{\Sigma}$ having maximum eigenvalue $\lambda^{\star}$ and with distinct means $\f{\mu}_i \in
(\epsilon\mathbb{Z})^d$. Then there exists an algorithm that requires $O\Big(\exp((\lambda^{\star}d^2k^8/\epsilon)^{2/3})\Big)$ samples from $\ca{M}$ and
exactly identifies the means $\{\f{\mu}_i\}_{i=1}^k$ with high
probability.
\end{theorem}

\paragraph{Overview:} We will project the samples from $\ca{M}$ randomly onto a direction (vector $\in \mathbb{R}^d$) $\f{r}$ whose entries are sampled uniformly and independently from the set
\begin{align*}
\ca{T} \equiv \{0,1,\dots,dk^4-1,dk^4\}.
\end{align*} 
In that case the projected samples are equivalent to obtaining samples from the univariate Gaussian mixture $\ca{M}^{\f{r}}$ where
 $$\mathcal{M}^{\f{r}}=\frac{1}{k}\sum_{i=1}^k\ca{N}(\f{r}^{T}\boldsymbol{\mu}_i,\f{r}^{T}\mathbf{\Sigma}\f{r}).$$
 Subsequently, we can recover the parameters $\{\f{r}^{T}\f{\mu}_i\}_{i=1}^{k}$ by invoking Theorem \ref{thm:main} and thus we can obtain a linear constraint on the parameters. However, there is an additional issue of alignment because for two distinct directions $\f{r}_1$ and $\f{r}_2$, we need to figure out the recovered parameters that correspond to the same component in the Gaussian mixture. In order to align, we will also project on the direction $\f{r}_1+\f{r}_2$ and therefore recover the parameters $\{(\f{r}_1+\f{r}_2)^{T}\f{\mu}_i\}_{i=1}^{k}$. Subsequently, we will claim that an element in $\{\f{r}_1^{T}\f{\mu}_i\}_{i=1}^{k}$ and an element in $\{\f{r}_2^{T}\f{\mu}_i\}_{i=1}^{k}$ corresponds to the same component in the mixture if their sum belongs to the set $\{(\f{r}_1+\f{r}_2)^{T}\f{\mu}_i\}_{i=1}^{k}$. Therefore, we will first find a direction $\f{r}_1$ such that the recovered parameters $\{\f{r}_1^{T}\f{\mu}_i\}_{i=1}^{k}$ are distinct and subsequently we will consider $\f{r}_1$ to be fixed. Next we will define a direction $\f{r}$ to be \textit{good} with respect to $\f{r}_1$ if we can correctly align $\f{r}$ and $\f{r}_1$ i.e. correctly identify the projected means which correspond to the same component of the mixture $\ca{M}$ for all the $k$ components. We will show that with high probability, the next $d-1$ directions $\f{r}_2,\dots,\f{r}_d$ uniformly and independently sampled from $\ca{T}^{d}$ are \textit{good} directions with respect to $\f{r}_1$. Since we have correctly aligned $\f{r}_2,\dots,\f{r}_d$ with $\f{r}_1$, we now have $d$ linear constraints on each of $\{\f{\mu}_i\}_{i=1}^{k}$ and if all of $\f{r}_1,\f{r}_2,\dots,\f{r}_d$ are linearly independent, then we can recover the parameters $\{\f{\mu}_i\}_{i=1}^{k}$. \\\\
 
\paragraph{Proof of Theorem \ref{thm:main2}}  

We will randomly choose $d$ directions $\f{r}_1,\f{r}_2,\dots,\f{r}_d$ such that its entries are uniformly and independently sampled from $\ca{T}$. 
Therefore, for a particular direction $\f{r}$, we must have that 
\begin{align*}
\Pr(\f{r}^{T}\boldsymbol{\mu}_i=\f{r}^{T}\boldsymbol{\mu}_j) \le \frac{1}{dk^4+1}.
\end{align*}
We can take a union bound over all distinct pairs of $i,j \in [k]$ to show that
\begin{align*}
\Pr(\{\f{r}^{T}\boldsymbol{\mu}_i\}_{i=1}^{k} \text{ are distinct }) > 1- \frac{k^{2}}{dk^4+1}.
\end{align*}
Further taking a union bound over all $d$ directions, we can conclude that the means $\{\f{\mu}_i\}_{i=1}^{k}$ projected on $\f{r}_j^{T}$ are distinct for all $j=1,2,\dots,d$ with probability at least $1-(1/k^2)$. Next, we want to evaluate the probability that $\f{r}_2$ is \textit{good} with respect to $\f{r}_1$ where the randomness is over $\f{r}_2$ and $\f{r}_1$ is considered to be fixed. For a triplet of indices $(i_1,i_2,i_3) \in [k]$ such that not all of them are identical, we must have 
\begin{align*}
\Pr(\f{r}_1^{T}\mu_{i_1}+\f{r}_2^{T}\mu_{i_2}=(\f{r}_1+\f{r}_2)^{T}\mu_{i_3})
\end{align*}
to be atmost $1/(dk^{3}+1)$. Taking a union bound over all triplets $(i_2,i_2,i_3)$ which are not identical, the probability that $\f{r}_2$ is not \textit{good} with respect to $\f{r}_1$ is atmost $1/dk$. Subsequently, taking a union bound, we get that all the $d-1$ directions $\f{r}_2,\f{r}_3,\dots,\f{r}_d$ are \textit{good} with respect to $\f{r}_1$ with probability at least $1-(1/k)$. Notice that the fact $\f{r}_2,\dots,\f{r}_d$ being \textit{good} with respect to $\f{r}_1$ also implies that the means $\f{\mu}_1,\dots,\f{\mu}_d$ projected on $\f{r}_1+\f{r}_j$ are distinct for all $j=1,2,\dots,d$. 
Therefore, at this point we can invoke Theorem \ref{thm:main} to conclude that we can recover $\{\f{r}^{T}\f{\mu}_j\}_{j=1}^{k}$ for all the direction vectors $$\f{r} \in \{\f{r}_1,\f{r}_2,\dots,\f{r}_d\} \cup \{\f{r}_1+\f{r}_2,\f{r}_1+\f{r}_3,\dots,\f{r}_1+\f{r}_d\}$$ with high probability. Since the entries in all the direction vectors are integral, hence the projected means also have a precision of $\epsilon$ and moreover, the variance of the projected mixture is upper bounded as 
\begin{align*}
\max_{\f{r}} \f{r}^{T} \f{\Sigma} \f{r} \le \lambda^{\star} ||\f{r}||_{2}^{2} \le \lambda^{\star}d^2k^8. 
\end{align*}  
Therefore, invoking Theorem \ref{thm:main} and plugging in the upper bound on the variance and the precision, we would need $O\Big(\exp((\lambda^{\star}d^2k^8/\epsilon)^{2/3})\Big)$ samples to recover all the projected means correctly with high probability.
Moreover, we can consider $\{\f{r}_1^{T}\f{\mu}_j\}_{j=1}^{k}$ to be $k$ labels and since $\f{r}_2,\dots,\f{r}_d$ is \textit{good} with respect to $\f{r}_1$, we can assign a label correctly to each mean projected on one of $\f{r}_2,\dots,\f{r}_d$ and therefore recover $\{\f{r}_{i}^{T}\f{\mu}_j\}_{i=1}^{d}$ for all $j=1,2,\dots,k$. Finally, since 
\begin{align*}
\Pr(\f{r}_1,\dots,\f{r}_d \text{ linearly independent over $\mathbb{R}$}) \ge \Pr(\f{r}_1,\dots,\f{r}_d \text{ linearly independent over $\mathbb{F}_{dk^4+1}$}),
\end{align*}
the probability of $\f{r}_1,\f{r}_2,\dots,\f{r}_d$ being linearly independent is at least 
\begin{align*}
\prod_{i=1}^{d}\Big(1-\frac{1}{(dk^4+1)^{i}}\Big).
\end{align*}
If all $\f{r}_1,\dots,\f{r}_d$ are linearly independent, then we can solve and therefore recover all the means $\f{\mu}_1,\dots,\f{\mu}_k$.

\subsection{Technique 2: Complex analysis in high dimension}
In this section, we will show how to extend our results for univariate mixtures to higher dimensions by directly applying complex analysis machinery in the higher dimensions. As an example, we will again prove our results for mixtures of multi-variate Gaussians $\ca{M}=\frac{1}{k}\sum_{i=1}^k\mathcal{N}(\f{\mu}_i,\f{\Sigma})$ with means $\f{\mu}_1,\f{\mu}_2,\dots,\f{\mu}_k \in \mathbb{R}^d$ and a shared co-variance matrix $\f{\Sigma}\in \mathbb{R}^{d\times d}$. We have the following theorem:
\begin{theorem}[Learning Multi-variate Gaussian mixtures]{\label{thm:main3}}
Let $\ca{M}=\frac{1}{k}\sum_{i=1}^k\mathcal{N}(\f{\mu}_i,\f{\Sigma})$
be a uniform mixture of $k$ $d$-dimensional Gaussians with known shared
covariance matrix $\f{\Sigma}$ having maximum eigenvalue $\lambda^{\star}$ and with distinct means $\f{\mu}_i \in
(\epsilon\mathbb{Z})^d$. Then there exists an algorithm that requires $$k^3\exp\Bigg(O\Big(\Big(\frac{\lambda^{\star}d(\log k)^{1/d}}{\epsilon^{2}}\Big)^{\frac{d}{d+2}}\Big)\Bigg) $$ samples from $\ca{M}$ and
exactly identifies the means $\{\f{\mu}_i\}_{i=1}^k$ with high
probability.
\end{theorem}
We will begin by showing an extension of Lemma \ref{lem:npb} for higher dimensions that was also proved in \cite{krishnamurthy2019trace} for $d=2$ but we provide the details here for the sake of completeness. 
\begin{lemma}{\label{lem:npbext}}
Let $f(z_1,z_2,\dots,z_d)$ be a non-zero polynomial whose coefficients are in $\{-1,0,+1\}$ and has atmost $2k$ non-zero terms. In that case, 
\begin{align*}
\left|f(z_1^{\star},z_2^{\star},\dots,z_d^{\star})\right| \ge \exp(-C_1L^d \log k)
\end{align*}
for some $z_1^{\star}=\exp(i\theta_1),z_2^{\star}=\exp(i\theta_2),\dots,z_d^{\star}=\exp(i\theta_d)$ where $|\theta_1|,|\theta_2|,\dots,|\theta_d| \le \pi/L$ and $C_1$ is a universal constant.
\end{lemma}
\begin{proof}
Fix $L > 0$ and define the polynomial
\begin{align*}
F(z_1,z_2,\dots,z_d)=\prod_{1\leq a_1,a_2,\dots,a_d \leq L} f( z_1 e^{\pi i a_1 /L}, z_2 e^{\pi i a_2 /L},\dots,z_d e^{\pi i a_d /L} ).
\end{align*}
We first show that there exists $z^\star_1,z^\star_2,\dots,z^\star_d$ on the unit disk ($|z^{\star}_1|=|z^{\star}_2|=\dots=|z^{\star}_d|=1$) such that
$F(z^\star_1,z^\star_2,\dots,z^\star_d) \geq 1$. This follows from an iterated application of the
maximum modulus principle. First factorize $F(z_1,z_2,\dots,z_d) = z_d^{s_d}
F^1(z_1,z_2,\dots,z_d)$ where $s_d$ is chosen such that $F^1(z_1,z_2,\dots,z_d)$ has no common
factors of $z_d$. Since $F$ has non-zero coefficients, this implies
that $F^1(z_1,z_2,\dots,0)$ is a non-zero polynomial and therefore using the maximum modulus principle, there exists a value of $z_d=z_d^{\star}$ such that $|z_d^{\star}|=1$ and therefore 
\begin{align*}
|z^{\star}_d|^{s_d}|F^1(z_1,z_2,\dots,z^{\star}_d)| \ge |F^1(z_1,z_2,\dots,0)|. 
\end{align*}
Subsequently we can further factorize $F^1(z_1,z_2,\dots,0)=z_{d-1}^{s_{d-1}}F^2(z_1,z_2,\dots,z_{d-1})$ so that $F_2(z_1,z_2,\dots,z_{d-1})$ has no common factors in $z_{d-1}$. Repeating this procedure $d$ times, we can show the following chain of inequalities 
\begin{align*}
&|F(z^\star_1,z^\star_2,\dots,z^{\star}_d)| = |F^1(z^\star_1,z^\star_2,\dots,z^{\star}_d)| \geq |F^1(z^\star_1,\dots,0)| \\
 &= |F^2(z^\star_1,z^\star_2,\dots,z^{\star}_{d-1})| \geq |F^2(z^\star_1,\dots,0)| \geq \dots |F^d(z^{\star}_1)|\ge |F^d(0)| \ge 1 
\end{align*}

\begin{lemma}
Consider two distinct uniform mixture of $d$-dimensional gaussians $\ca{M}$ and $\ca{M}'$ having $k$ components each and consider the set $\ca{A}_{\ca{M},\ca{M}'}$ where 
\begin{align*}
\ca{A}_{\ca{M},\ca{M}'} \equiv \{x \in \mathbb{R}^d \mid \ca{M}(x) \ge \ca{M}'(x)\}.
\end{align*}
The VC dimension of the class $\ca{A} \equiv \{\ca{A}_{\ca{M},\ca{M}'} \text{ for any two distinct mixtures } \ca{M},\ca{M}'\}$ is $O(k)$.
\end{lemma}
\begin{proof}
\end{proof}

Now, for any $a_1,a_2,\dots,a_d \in \{1,\ldots,L\}$ we have
\begin{align*}
1 \leq |F(z_1^\star,z_2^\star,\dots,z_d^{\star})| \leq |f(z_1^\star e^{\pi ia_1/L}, z_2^\star e^{\pi i a_2/L},\dots, z_d^\star e^{\pi i a_d/L})| \cdot (2k)^{(L^d-1)},
\end{align*}
where we are using the fact that $|f(z_1,z_2,\dots,z_d)| \leq 2k$. This proves the lemma, since we may choose $a_1,a_2,\dots,a_d$ such that $z_j^\star e^{\pi ia_j /L} = \exp(i\theta_j)$ for $|\theta_j| \leq \pi/L$ for all $j=1,2,\dots,d$. 
\end{proof}

\paragraph{Proof of Theorem \ref{thm:main3}} 
The characteristic function of a Gaussian $X \sim \mathcal{N}(\f{\mu},\f{\Sigma})$ is 
\begin{align*}
C_\mathcal{N}(\f{t})=\EE e^{i\f{t}^{T} X}=e^{i\f{t}^{T}\mu-\frac{\f{t}^{T}\f{\Sigma}\f{t}}{2}}.
\end{align*}   
Therefore, we have that for two mixtures $\ca{M},\ca{M}'$ such that its mean parameters are distinct, we must have
\begin{align*}
C_{\mathcal{M}}(\f{t})-C_{\mathcal{M'}}(\f{t}) \ge \frac{e^{-\frac{\f{t}^{T}\f{\Sigma}\f{t}}{2}}}{k} \sum_{i =1}^k (e^{i\f{t}^{T}\mu_i}-  e^{i\f{t}^{T} \mu_i'}) .
\end{align*}
Let us denote by $\f{t}_j$ to be the $j^{th}$ entry of $\f{t}$, $\mu_{i,j}$ denote the $j^{th}$ entry of the mean $\mu_i$ and let $z_j=e^{i\f{t}_j}$ be a complex number that belongs to circumference of the unit circle. In that case, we can rewrite
\begin{align*}
C_{\mathcal{M}}(\f{t})-C_{\mathcal{M'}}(\f{t}) \ge \frac{e^{-\frac{\f{t}^{T}\f{\Sigma}\f{t}}{2}}}{k} \sum_{i =1}^k (\prod_{j=1}^{d} z_j^{\f{\mu}_{i,j}}- \prod_{j=1}^{d} z_j^{\f{\mu}_{i',j}}) .
\end{align*}

Now, using Lemma~\ref{lem:npbext}, there exist an absolute constant $c$ such that,  
\begin{align*}
\max_{ z_1,z_2,\dots,z_d \in \{e^{i\theta}:|\theta| \le \frac{\pi}{L}\}} \big| \sum_{i =1}^k (\prod_{j=1}^{d} z_j^{\f{\mu}_{i,j}}- \prod_{j=1}^{d} z_j^{\f{\mu}_{i',j}}) \big| \ge e^{-cL^d \log k}.
\end{align*}
Also, for $z_1,z_2,\dots,z_d \in \{e^{i\theta}:|\theta| \le \frac{\pi}{L}\}$, we must have
\begin{align*}
e^{-\frac{\f{t}^{T}\f{\Sigma}\f{t}}{2}} \ge e^{-\frac{\lambda^{\star}||\f{t}||_2^2}{2}} \ge e^{-\frac{\lambda^{\star}d\pi^2}{2\epsilon^2L^2}}
\end{align*}
where $\lambda^{\star}$ is the largest eigenvalue of the matrix $\f{\Sigma}$. Therefore, 
\begin{align*}
\Big|C_\Mc(\f{t})-C_{\Mc'}(\f{t})\Big| \ge \frac{1}{k}  \exp\Big(-\frac{\lambda^{\star}d\pi^2}{2\epsilon^2L^2}-cL^d\log k\Big).
\end{align*}
By substituting $L = \Big(\frac{(\pi\sigma)^{2}}{(\epsilon^2c)}\Big)^{1/(d+2)}$ above we conclude that there exists $\f{t}$ such that 
\begin{align*}
\Big|C_\Mc(\f{t})-C_{\Mc'}(\f{t})\Big| \ge \frac{1}{k}  \exp\Big(-c'(\log k)^{\frac{2}{d+2}}\Big(\frac{\lambda^{\star}d}{\epsilon^{2}}\Big)^{\frac{d}{d+2}}\Big).
\end{align*}
where $c'$ is some absolute constant $>0$.
Now using 
 Lemma~\ref{lem:chartv}, we have
$$\variation{\mathcal{M}' -\mathcal{M}} \geq k^{-1} \exp\Bigg(-\Omega\Big(\Big(\frac{\lambda^{\star}d(\log k)^{1/d}}{\epsilon^{2}}\Big)^{\frac{d}{d+2}}\Big)\Bigg).$$
Suppose we want to learn the parameters of the mixture 
\[\mathcal{M} =
  \frac{1}{k}\sum_{i=1}^k \Nc(\f{\mu}_i,\f{\Sigma}) ~~\mbox{ where }~~\f{\mu}_i \in (\epsilon\mathbb{Z})^{d} \]

Now, we again use the minimum distance estimator as in Section \ref{sec:imp} \cite[Section~6.8]{devroye2012combinatorial}. Recall the set $\ca{A}$ which was defined as
\begin{align*}
\Ac \equiv \{\{x: \Mc(x) \ge \Mc'(x)\}: \text{ for any two mixtures } \Mc \ne \Mc'\}
\end{align*}
a collection of subsets. We also know (refer to Section \ref{sec:imp}) that given $m$ samples from $\Mc$ and with $\Delta = \sup_{A \in \Ac}|\Pr_{\sim\Mc}(A) - P_m(A)|$, we have
  $$
  \variation{\hat{\Mc} -\Mc} \le 4\Delta +\frac{3}{m}.
  $$
where $P_m(A)$ is the empirical probability induced by the $m$ samples. We now upper bound the right-hand side of the above inequality.  Via
McDiarmid's inequality and a standard symmetrization argument, $\Delta$ is concentrated around its mean which is a function of $VC(\Ac)$, the VC dimension of the class $\Ac$, see
\cite[Section~4.3]{devroye2012combinatorial}:  
$$
\variation{\hat{\Mc} - \Mc} \leq 4\Delta + O(1/m) \leq 4\avg_{\sim\Mc} \Delta + O(1/\sqrt{m}) \le c \sqrt{\frac{VC(\Ac)}{m}},
$$ with high probability, for an absolute constant $c$. Now, for the class $\Ac$ defined above, the VC dimension is given by $VC(\Ac) = O(k)$ and as a result the error of the minimum distance estimator is $O(\sqrt{k/m})$ with high probability. 
 Therefore, as long as $\variation{\hat{\Mc} -\Mc} \le \frac12 \variation{\Mc -\Mc'}$ we will exactly identify the parameters. Hence
$$m = k^3\exp\Bigg(O\Big(\Big(\frac{\lambda^{\star}d(\log k)^{1/d}}{\epsilon^{2}}\Big)^{\frac{d}{d+2}}\Big)\Bigg) $$ samples suffice to exactly learn the parameters with high probability.
}

\end{document}